\newtheorem{theorem}{Theorem}[section]
\theoremstyle{definition}
\theoremstyle{definition}
\theoremstyle{observation}
\renewcommand{\vec}[1]{\mathrm{vec}\left( #1 \right)}
\algrenewcommand{\algorithmicforall}{\textbf{for each}}
\title{Cholesky Factor Interpolation for Efficient Approximate Cross-Validation}
\author{
  Da Kuang\\
  University of California, Los Angeles \\
  Los Angeles, CA 90095 \\
  \texttt{da.kuang@cc.gatech.edu}
  \And
  Alex Gittens \\
  International Computer Science Institute \\
  Berkeley, CA 94704 \\
  \texttt{gittens@icsi.berkeley.edu}
  \And
  Raffay Hamid\\
  DigitalGlobe Inc.\\
  Seattle, WA 98103\\
  \texttt{raffay@cc.gatech.edu}
%  \And
%  Alex Gittens \\
%  International Computer Science Institute \\
%  Berkeley, CA 94704 \\
%  \texttt{gittens@icsi.berkeley.edu}
}
\begin{document}

\maketitle

\begin{abstract}

The dominant cost in solving least-square problems using Newton's method is
often that of factorizing the Hessian matrix over multiple values of the
regularization parameter ($\lambda$). We propose an efficient way to 
interpolate the Cholesky factors of the Hessian matrix computed over a small
set of $\lambda$ values. This approximation enables us to optimally minimize the
hold-out error while incurring only a fraction of the cost compared to exact
cross-validation. We provide a formal error bound for our approximation scheme
and present solutions to a set of key implementation challenges that allow our
approach to maximally exploit the compute power of modern architectures.
We present a thorough empirical analysis over multiple datasets to show the
effectiveness of our approach.

\end{abstract}

\section{Introduction}
\label{sec:introduction}

\noindent Least-squares regression has continued to maintain its significance as a worthy opponent to more advanced learning algorithms. This is mainly because:

\vspace{0.025cm}\noindent ${\textbf{a}-}$ Its closed form solution can be found
efficiently by maximally exploiting modern hardware using high performance
BLAS-$3$ software~\cite{golub2012matrix}.

\vspace{0.025cm}\noindent ${\textbf{b}-}$ Advances in kernel
methods~\cite{scholkopf1999advances}~\cite{le2013fastfood}~\cite{kar2012random}
can efficiently construct non-linear spaces, where the closed form solution of
linear regression can be readily used.

\vspace{0.025cm}\noindent ${\textbf{c}-}$ Availability of error correcting
codes~\cite{dietterich1995solving} allow robust simultaneous learning of
multiple classifiers in a single pass over the data.

\vspace{0.125cm}\noindent However, an important bottleneck in solving large
least-squares problems is the cost of $k$-fold cross
validation~\cite{alpaydin2004introduction}. To put this cost in perspective, we
show in Figure~\ref{fig:sse_pipeline} the costs of performing the main steps in
solving least-squares using Newton's method.

Specifically, for $d$-dimensional data each fold requires finding optimal value
of regularization parameter $\lambda$ searched over $q$ values. This requires
solving a linear system with $d$ variables, represented by the $d \times d$
Hessian matrix, $q$ times for each of the $k$ folds. As solving this system
using the Cholesky factorization of the Hessian costs $\mathcal{O}(d^{3})$
operations, the total cost of $k$-folds adds up to $\mathcal{O}(kqd^{3})$
operations. Other considerable costs include computing the Hessian requiring
$\mathcal{O}(nd^{2})$ operations. Comparing these costs, we see that when $n <
kqd$, cross validation is the dominant cost. Figure~\ref{fig:timing_info}
presents an empirical sense of cross-validation and Hessian costs as a function
of $n$ and $d$.

\begin{figure}[t]
\centering
\includegraphics[width=0.75\textwidth]{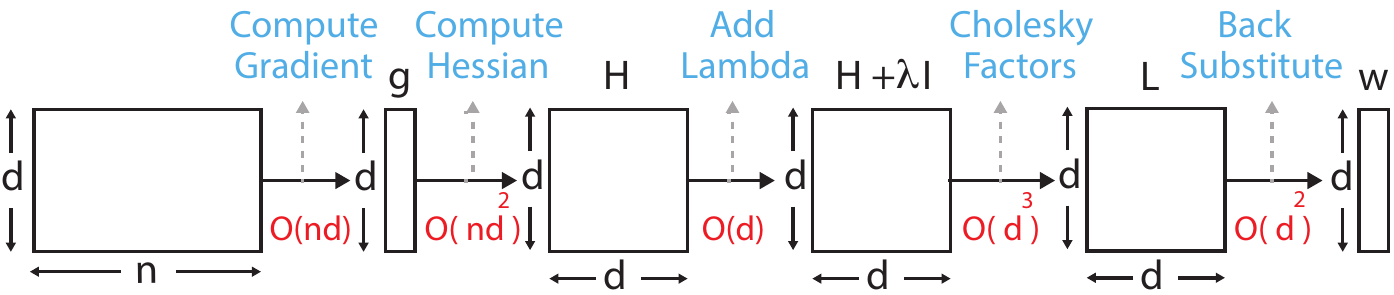}
  \caption{{\small Computational steps for least squares regression.}}
  \label{fig:sse_pipeline}
  \vspace{-0.3cm}
\end{figure}

\begin{figure*}[b]
\centering
\includegraphics[width=1.0\textwidth]{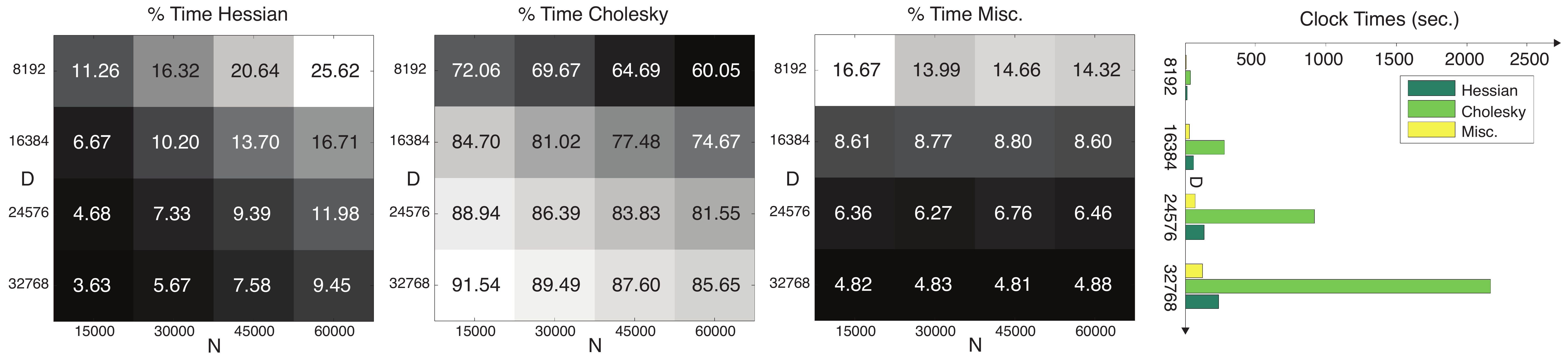}
  \caption{{\small Percent times taken by the three main steps in least-squares
      pipeline for MNIST~\cite{lecunmnist} data projected using polynomial
      kernel~\cite{kar2012random} to different sized feature spaces. In Figures
      a, b, and c, the x-axis represents the number of training points, while
      the y-axis represents the size of the feature space.}}
  \label{fig:timing_info}
  \vspace{-0.3cm}
\end{figure*}

Our goal is to reduce the computational cost of cross-validation without
increasing the hold-out error. To this end, we propose to densely interpolate
Cholesky factors of the Hessian matrix using a sparse set of $\lambda$ values.
Our key insight is that Cholesky factors for different $\lambda$ values lie on
smooth curves, and can hence be approximated using polynomial functions (see
Figure~\ref{fig:pichol_teaser} for illustration). We provide empirical evidence
supporting this intuition and provide an error bound for this approximation.

\begin{figure}
\centering
\includegraphics[width=0.45\textwidth]{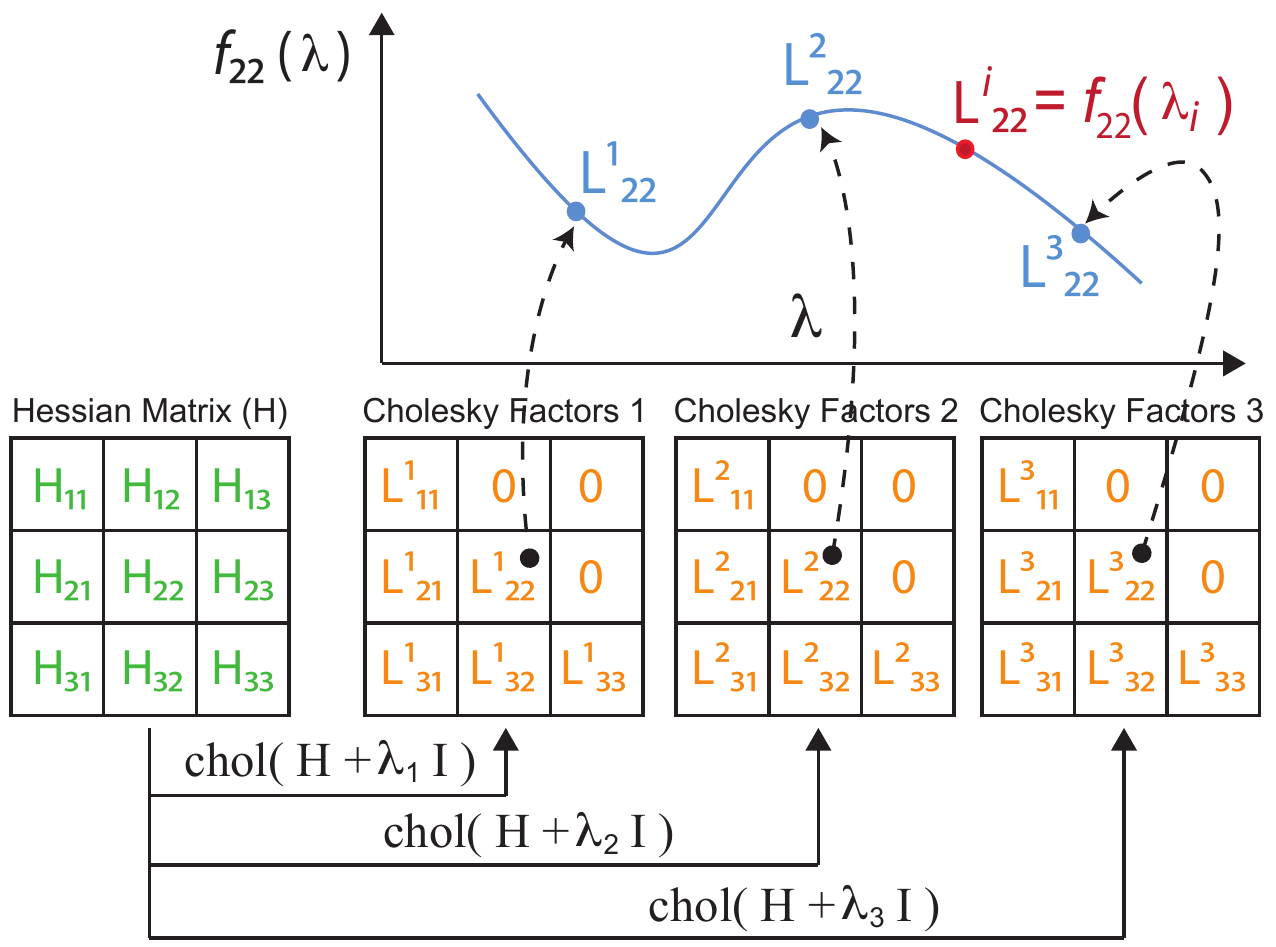}
  \caption{{\small Given a Hessian matrix $\mathbf{H}$, we compute the Cholesky
      factors $\mathbf{L}^s$ of $\mathbf{H} + \lambda_s\mathbf{I}$ for a small
      set $\{\lambda_s\}$. For each $(p,q)$-th entry in these Cholesky factors
      ($1 \leq p,q \leq d$), we fit a polynomial function $f_{p,q}(\lambda)$.
      These functions are used to estimate entries of $\mathbf{L}$ for a larger
      set of $\lambda$ values.}}
  \label{fig:pichol_teaser}
  \vspace{-0.3cm}
\end{figure}

We formalize a least-squares framework to simultaneously learn the multiple
polynomial functions required to densely interpolate the Hessian factors. An
important challenge to solve this problem efficiently is a matrix-vector
conversion strategy with minimal unaligned memory access (see
$\S$~\ref{sec:ichallenges} for details). To address this challenge, we propose
a general-purpose strategy to efficiently convert blocks of Choelsky factors to
their corresponding vectors. This enables our learning framework to use
BLAS-$3$~\cite{golub2012matrix} level computations, therefore maximally
exploiting the compute power of modern hardware. Our results demonstrate that
the proposed approximate regularization approach offers a significant
computational speed-up while providing hold-out error that is comparable to
exact regularization.

\section{Related Work}

\noindent Solving linear systems has been well-explored in terms of their type~\cite{golub2012matrix}~\cite{saad2003iterative} and scale~\cite{kim2007interior}~\cite{van2003iterative}. Our focus is on the least-squares problem as the computational basis of linear regression. Popular methods to solve least-squares include QR~\cite{golub2012matrix}, Cholesky~\cite{horn2012matrix}, and Singular Value Decomposition (SVD)~\cite{golub1965calculating}. For large dense problems, Cholesky factorization has emerged as the method of choice which is used to solve the normal equation, \textit{i.e.}, the linear system represented by the Hessian matrix of least-squares problem. This is because of its storage ($\times 2$) and efficiency ($\times 2$ and $\times 39$) advantages over QR and SVD respectively~\cite{golub2012matrix}).

For real-world problems, it is common for system parameters to undergo change over time. Previous works in this context have mostly focused on low-rank updates in system parameters including direct matrix inverse~\cite{hager1989updating}, LU decomposition~\cite{kaess2011isam2}, Choleksy factorization~\cite{koutis2012fast}, and Singular Value Decomposition~\cite{bunch1978updating}~\cite{gu1995downdating}. Our problem is however different in two important ways: (i) we focus on linear systems undergoing full rank updates, and (ii) our updates are limited to the diagonal of the Hessian. Both of these attributes are applicable to the regularization of least-squares problems, as we shall see in $\S$~\ref{sec:pichol_framework}.

A standard way to solve regularized least-squares is to find the SVD of the input matrix once for each training fold, and then reuse the singular vectors for different $\lambda$ values. For large problems however, finding the SVD of a design matrix even once can be prohibitively expensive. In such situations, truncated~\cite{hansen1987truncatedsvd} or randomized approximate~\cite{halko2011finding} SVD can be used. However their effectiveness for optimizing hold-out error for least squares problems is still unexplored. There has also been work to reduce the number of regularization folds~\cite{cawley2004fast} by minimizing the regularizing path~\cite{efron2004least}. Our work can be used in conjunction with these approaches to further improve the overall performance. 
%\section{\protect{\MakeLowercase{\textbf{\textit{\large{PI}}}}Cholesky Framework}}
\section{\textit{pi}Cholesky Framework}
\label{sec:pichol_framework}
\subsection{Preliminaries}
\label{subsec:preliminaries}

\noindent Let $\mathbf{X}$ be the $n \times (d+1)$ design matrix with each row as one of the $n$ training examples in a $d$-dimensional space. Let $\mathbf{y}$ be the $n$ dimensional vector for training labels. Then the Tychonov regularized~\cite{tikhonov1943stability} least-squares cost function $\mathbf{J(\theta)}$ is given as:
\begin{equation}
\begin{small}
\mathbf{J(\theta)} = \frac{1}{2}(\mathbf{y} - \mathbf{X\theta})^{\top}(\mathbf{y} - \mathbf{X\theta}) + \frac{\lambda}{2}\mathbf{\theta}^{\top}\mathbf{\theta}
\end{small}
\end{equation}
\noindent Setting the derivative of $\mathbf{J(\theta)}$ with respect to $\mathbf{\theta}$ equal to zero results in the following solution of $\theta$:
\begin{equation}
\begin{small}
\mathbf{\theta} = (\mathbf{H} + \lambda \mathbf{I})^{-1}\mathbf{g}
\label{eq:reg_normal_equation_Hg}
\end{small}
\end{equation}
\noindent where $\mathbf{I}$ is the $(d+1) \times (d+1)$ identity matrix, the Hessian matrix $\mathbf{H}$ equals $\mathbf{X}^{\top}\mathbf{X}$ and the gradient vector $\mathbf{g}$ is $\mathbf{X}^{\top}\mathbf{y}$. Equation~\ref{eq:reg_normal_equation_Hg} is solved for different values of $\lambda$ in a k-fold cross-validation setting, and the $\lambda$ with minimum hold-out error in expectation is picked.

\subsection{Least Squares using Cholesky Factors}
\label{subsec:exactcholesky}

\vspace{0.1cm}\noindent Rewriting Equation~\ref{eq:reg_normal_equation_Hg} as $\mathbf{A}\theta = \mathbf{g}$, where $\mathbf{A}$ $=$ $\mathbf{H} + \lambda \mathbf{I}$, we can find Cholesky factors of $\mathbf{A}$ as $\mathbf{A} = \mathbf{L}\mathbf{L}^{\textrm{T}}$, where $\mathbf{L}$ is a lower-triangular matrix. Equation~\ref{eq:reg_normal_equation_Hg} can now be written as $\mathbf{L}\mathbf{L}^{\textrm{T}}\theta = \mathbf{g}$. Using $\mathbf{L}^{\textrm{T}}\theta$ as $\mathbf{w}$, this can be solved by a forward-substitution to solve the triangular system $\mathbf{L}\mathbf{w} = \mathbf{g}$ for $\mathbf{w}$ followed by a back-substitution to solve the triangular system $\mathbf{L}^{\textrm{T}}\theta = \mathbf{w}$ for $\theta$.

\begin{figure}
\centering
\includegraphics[width=0.65\textwidth]{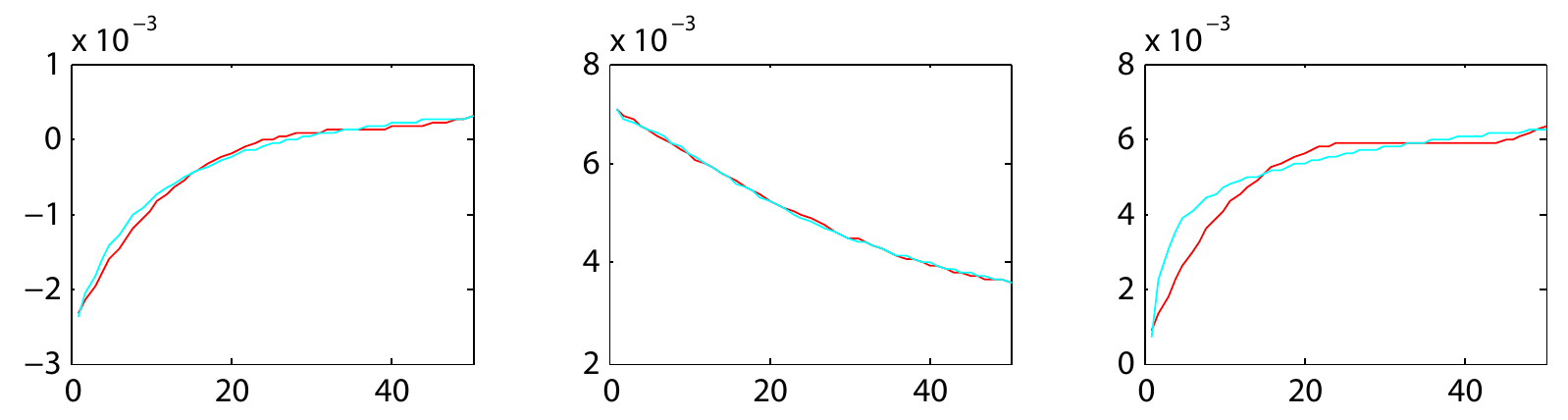}
  \caption{{\small Plots for subset of entries in $\mathbf{L}$ (y-axes) for different $\lambda$ values (x-axes) on MNIST data. The red curves use exact Cholesky over $50$ $\lambda$ values, while the blue curves show the interpolated values of $\mathbf{L}$ evaluated from $2^{\textrm{nd}}$ order polynomials learned using $6$ values of $\lambda$.}}
  \label{fig:exact_inter_factors}
  \vspace{-0.3cm}
\end{figure}

\subsection{Cholesky Factors Interpolation}

\vspace{0.1cm}\noindent To minimize the computational cost for cross-validation, we propose to compute Cholesky factors $\mathbf{L}\mathbf{L}^{\textrm{T}}$ of $\mathbf{A}$ over a small set of $\lambda$ values, followed by interpolating the corresponding entries in $\mathbf{L}$ for a more densely sampled set of $\lambda$ values. Note that corresponding entries of $\mathbf{L}$ for different values of $\lambda$ lie on smooth curves that can be accurately approximated using multiple polynomial functions. Figure~\ref{fig:exact_inter_factors} illustrates this point empirically, where the red curves show multiple corresponding entries in $\mathbf{L}$ computed over different $\lambda$ values using exact Cholesky for MNIST data.

Recall that for $d$-dimensional data, there are $\textrm{D} = (d+1)(d+2)/2$ number of entries in the lower-triangular part of $\mathbf{L}$. Therefore, to interpolate a Cholesky factor based on those computed for different values of $\lambda$, we need to learn $\textrm{D}$ polynomial functions, each for an entry in the lower-triangular part of $\mathbf{L}$.

This challenge can be posed as a least-squares problem. Recall that each polynomial function to be learned is of order $r$. We therefore need $g > r$ exact Cholesky factors, each of which is computed using one of the $g$ values of $\lambda$. We evaluate a polynomial basis of the space of $r$-th order polynomials at the sparsely sampled $g$ values of $\lambda$ (\textit{e.g.}, $1,\lambda$, and $\lambda^2$ for second-order polynomials); this way, we form our $g \times (r+1)$ observation matrix $\mathbf{V}$. Our targets are the $g$ rows of $\textrm{D}$ values, where each row corresponds to the exact Cholesky matrix computed for one of the $g$ values of $\lambda$. This forms our $g \times \textrm{D}$ target matrix $\mathbf{T}$.

In Algorithm~\ref{alg:chol_interp}, we use monomials as the polynomial basis to interpolate the $r$th-order polynomials. While we can employ other polynomial bases that are numerically more stable (such as Chebyshev polynomials), we found in our experiments that the observation matrix $\mathbf{V}$ is well-conditioned and therefore using monomials does not harm our numerical stability.

\begin{algorithm}[t]
\caption{{\sc -- \textit{pi}Cholesky}} \label{alg:chol_interp}
{\bf Input:} Degree $r$, $\{\lambda_{s}\}$ for $s =\{1, 2, \cdot \cdot g\}$, $g > r$\\
{\bf Output:} A $(r+1) \times \textrm{D}$ coefficient matrix $\Theta$, where $\textrm{D} = (d+1)(d+2)/2$ and $d$ is the data dimensionality\\\vspace{-0.25cm}
\begin{algorithmic}[1]
    \State Find $\mathbf{L}^{s} = \textrm{chol}(\mathbf{H} + \lambda_{s}\mathbf{I})$ for $s =\{1, 2, \cdot \cdot \cdot g\}$
    \State Convert each $\mathbf{L}^{s}$ into a row vector to construct the $g \times \textrm{D}$ target matrix $\mathbf{T}$
        \State Find the $g \times g$ Vandermonde matrix $\mathbf{W}$
        \State Extract the leftmost $(r+1)$ colums of $\mathbf{W}$ to form the $g \times (r+1)$ observation matrix $\mathbf{V}$
        \State Find $\mathbf{G}_{\lambda} = \mathbf{V}^{\textrm{T}}\mathbf{T}$ and $\mathbf{H}_{\lambda} = \mathbf{V}^{\textrm{T}}\mathbf{V}$
        \State Find $\Theta = \mathbf{H}_{\lambda}^{-1}\mathbf{G}_{\lambda}$
\end{algorithmic}
\end{algorithm}

\vspace{0.1cm} \noindent We can now define our cost function $\mathbf{J}_{\lambda}(\Theta)$ as:
\begin{equation}
\begin{small}
\mathbf{J}_{\lambda}(\Theta) = \frac{1}{2}(\mathbf{T} - \mathbf{V}\Theta)^{\top}(\mathbf{T} - \mathbf{V}\Theta)
\end{small}
\end{equation}
\noindent where $\Theta$ is the $(r+1) \times \textrm{D}$ polynomial coefficient matrix. Each column of $\Theta$ represents the $(r+1)$ coefficients of the $\textrm{D}$ polynomial functions. Following procedure similar to $\S$~\ref{subsec:preliminaries}, the expression of $\Theta$ can be written as
\begin{equation}
\begin{small}
\Theta = \mathbf{H}_{\lambda}^{-1}\mathbf{G}_{\lambda}
\end{small}
\end{equation}
\noindent Here $\mathbf{H}_{\lambda} = \mathbf{V}^{\textrm{T}}\mathbf{V}$, and $\mathbf{G}_{\lambda} = \mathbf{V}^{\textrm{T}}\mathbf{T}$.

Given a new regularization parameter value $\lambda_{t}$, the value for $\mathbf{L}^{t}$ can be computed by evaluating the $\textrm{D}$ polynomial functions at $\lambda_{t}$. This procedure is listed in Algorithm~\ref{alg:chol_interp}. The interpolation results using Algorithm~\ref{alg:chol_interp} for the Cholesky factors computed on the MNIST data~\cite{lecunmnist} are shown with blue curves in Figure~\ref{fig:exact_inter_factors}. Here we set $g=5$ and $r=2$. As can be seen from the figure, the blue plots (interpolated) trace the red plots (exact) closely.

\vspace{0.1cm}\noindent \textbf{Computational Complexity}: The dominant step of Algorithm~\ref{alg:chol_interp} is evaluating $\mathbf{L}^{s}$ for $s = 1,2, \cdots, g$, which requires $\mathcal{O}(gd^{3})$ operations. The only other noteworthy steps of Algorithm~\ref{alg:chol_interp} are finding $\mathbf{G}_{\lambda}$ and $\Theta$ each of which takes $\mathcal{O}(grd^{2})$ operations. Since $d >> r$, the overall asymptotic cost of  Algorithm~\ref{alg:chol_interp} is $\mathcal{O}(gd^{3})$. Furthermore, it only takes $\mathcal{O}(rd^{2})$ operations to evaluate the interpolated Cholesky factor $\mathbf{L}^{t}$ for each $\lambda_{t}$ value.

\section{Theoretical Analysis}
\label{sec:theory}

\newcommand{\unvec}[1]{\ensuremath{\textrm{vec}^{-1}(#1)}}

There are two challenges in developing a reasonable bound on the error of the
\textit{pi}Cholesky algorithm. The first is determining the extent to which the
Cholesky factorization can be approximated entrywise as a polynomial: if one
explicitly forms the symbolic Cholesky factorization of even a $3 \times 3$
matrix, it is not clear at all that the entries of the resulting Cholesky factor
can be approximated well by any polynomial. The second is determining the extent
to which the \emph{particular} polynomial recovered by the \textit{pi}Cholesky
procedure of solving a least squares system (Algorithm~\ref{alg:chol_interp})
is a good polynomial approximation to the Cholesky factorization.

The classical tool for addressing the first challenge is the Bramble-Hilbert
lemma~\cite{BrennerScott:FEM}[Lemma 4.3.8], which guarantees the existence of a
polynomial approximation to a smooth function on a compact domain, with Sobolev
norm approximation error bounded by the Sobolev norm of the derivatives of the
function itself. Our proof of the existence of a polynomial approximation to the
Cholesky factorization is very much in the spirit of the Bramble-Hilbert lemma,
but provides sharper results than the direct application of the lemma itself
(essentially because we do not care about the error in approximating the
derivatives of the Cholesky factorization). We surmount the second obstacle by
noting that, if instead of sampling the Cholesky factorization itself when
following the \textit{pi}Cholesky procedure, we sample from a polynomial
approximation to the Cholesky factorization, then the error in the resulting
polynomials can be bounded using results on the stability of the solution to
perturbed linear systems.

In our arguments, we find it convenient to use the Fr\'echet derivative: given
a mapping $f : X \rightarrow Y$ between two normed linear linear spaces we
define $Df$, the derivative of $f$, to be the function that maps $\mathbf{u}
\in X$ to $D_{\mathbf{u}}f: X \rightarrow Y$, the unique linear map (assuming
it exists) that is tangent to $f$ at $\mathbf{u}$ in the sense that
\[
    \lim_{\|\pmb{\delta}\|_X \rightarrow 0} \frac{\|f(\mathbf{u} + \pmb{\delta}) - f(\mathbf{u}) - D_{\mathbf{u}}f(\pmb{\delta})\|_Y}{\|\pmb{\delta}\|_X} = 0,
\]
The Fr\'echet derivative generalizes the conventional derivative, so it follows
a chain rule, can be used to form Taylor Series expansions, and shares the
other properties of the conventional derivative~\cite{Marsden}.

We inductively define the $r$th derivative of $f,$ $D^rf$, as the function that
maps $\mathbf{u} \in X$ to the unique linear map tangent to
$D^{r-1}_{\mathbf{u}}f$ at $\mathbf{u}$ in the sense that
\[
    \lim_{\|\pmb{\delta}_1\|_X \rightarrow 0} \frac{\|D^{r-1}_{\mathbf{u} + \pmb{\delta}_1}f(\pmb{\delta}_2) - D^{r-1}_{\mathbf{u}}f(\pmb{\delta}_2) - D^r_{\mathbf{u}}f(\pmb{\delta}_1, \pmb{\delta}_2)\|_Y}{\|\pmb{\delta}_1\|_X} = 0.
\]

For a comprehensive introduction to Fr\'echet derivatives and their properties, we refer the reader to~\cite{Marsden}.

\subsection{Performance Guarantee for \textit{pi}Cholesky}

Let $\mathbf{p}_{\mathrm{TS}}(\lambda; \lambda_{\textrm{c}})$ denote the
second-order polynomial obtained from the Taylor Series expansion of
$\mathcal{C}(\mathbf{A} + \lambda \mathbf{I})$ around $\lambda =
\lambda_{\textrm{c}}$ and let $\mathbf{p}_\pi(\lambda)$ denote the
approximation to $\mathcal{C}(\mathbf{A} + \lambda \mathbf{I})$ obtained using
the \textit{pi}Cholesky procedure. Our argument consists in bounding the errors
in approximating $\mathcal{C}$ with $\mathbf{p}_{\mathrm{TS}}$
(Theorem~\ref{thm:tsapprox}) and in approximating $\mathbf{p}_{\mathrm{TS}}$
with $\mathbf{p}_\pi$ (Theorem~\ref{thm:interpstability}). The root
mean-squared error in approximating $\mathcal{C}(\mathbf{A} + \lambda
\mathbf{I})$ with the \textit{pi}Cholesky procedure is then controlled using
the triangle inequality (Theorem~\ref{thm:mainresult}): if $\gamma = |\lambda -
\lambda_{\textrm{c}}|$ and $w = \max_i |\lambda_i - \lambda_{\textrm{c}}|$ is
the maximum distance of any of the sample points used in
Algorithm~\ref{alg:chol_interp} from $\lambda_{\textrm{c}}$, then

\begin{equation}
  \label{eqn:mainresult}
   \frac{1}{\sqrt{D}} \| \mathcal{C}(\mathbf{A} + \lambda \mathbf{I}) - 
   \mathbf{p}_{\pi}(\lambda) \|_F \leq \big[ \gamma^3 + 
   \sqrt{g} w^3 (1 + \gamma^2) (\lambda_{\textrm{c}} + 1) \|\mathbf{V}^\dagger\|_2 \big]
   \frac{\mathrm{R}_{[\lambda_{\textrm{c}} - \gamma, \lambda_{\textrm{c}} + \gamma]}}{\sqrt{D}}
\end{equation}

Here, $g$ is the number of sample points (values of $\lambda$) used in the
\textit{pi}Cholesky procedure, $D = (d+1)(d+2)/2$ is the number of elements in
$\mathcal{C}(\mathbf{A} + \lambda \mathbf{I})$, and $\mathrm{R}_{[a,b]}$ is defined in
Theorem~\ref{thm:tsapprox}. The quantity $\mathrm{R}_{[a,b]}$ measures the
magnitude of the third-derivative of $\mathcal{C}(\mathbf{A} +
\lambda\mathbf{I})$ over the interval $\lambda \in [a,b]$; when it is small,
the implicit assumption made by the \textit{pi}Cholesky procedure that
$\mathcal{C}$ is well-approximated by some quadratic polynomial is reasonable. Unfortunately
the task of relating $\mathrm{R}_{[a,b]}$ to more standard quantities such as the 
eigenvalues of $\mathbf{A}$ is beyond the reach of our analysis.

The matrix $\mathbf{V}$ (defined in Algorithm~\ref{alg:chol_interp}) is a
submatrix of the Vandermonde matrix formed by the sample points, so the quantity
$\|\mathbf{V}^\dagger\|_2$ measures the conditioning of the sample points used
to fit the \textit{pi}Cholesky polynomial approximant: it is small when the $g$
rows of $\mathbf{V}$ are linearly independent. This is exactly the setting in which we
expect the least-squares fit in Algorithm~\ref{alg:chol_interp} to be most stable.

The cubic dependence on $\gamma = |\lambda - \lambda_{\textrm{c}}|$ in our bound reflects our 
intuition that since  the Cholesky factorization is nonlinear, we do not expect the quadratic
approximation formed using the \textit{pi}Cholesky procedure to perform well
far away from the interpolation points used to form the approximation. The cubic dependence
on $w = \max_{i} | \lambda_i - \lambda_{\textrm{c}}|$ also captures our intuition that
we can only expect \textit{pi}Cholesky to give a good approximation when the 
interpolation points used to fit the interpolant cover a small interval
containing the optimal regularization parameter.

To algorithmically address the fact that this optimal
$\lambda_{\textrm{c}}$ is unknown, we introduce a Multi-level Cholesky
procedure in Section~\ref{exp:algs} that applies a binary-search-like procedure
to narrow the search range before applying \text{pi}Cholesky.

\subsection{Proof of the performance guarantee for \textit{pi}Cholesky}

Our first step in developing the Taylor series of $\mathcal{C}(\mathbf{A})$ is
establishing that $\mathcal{C}$ is indeed Fr\'echet differentiable, and finding
an expression for the derivative.
\begin{theorem}
\label{thm:derivofcholesky}
 Assume $\mathbf{A}$ is a positive-definite matrix and $\mathcal{C}(\mathbf{A}) = \mathbf{L}.$ Then $D_{\mathbf{L}} \mathcal{S}(\pmb{\Gamma}) = \pmb{\Gamma} \mathbf{L}^T + \mathbf{L} \pmb{\Gamma}^T$ for any lower-triangular matrix $\pmb{\Gamma}$ and $D_{\mathbf{L}} \mathcal{S}$ is full-rank. Furthermore, if $\mathbf{\Delta}$ is a symmetric matrix, then
 \[
  D_{\mathbf{A}} \mathcal{C}(\pmb{\Delta})= \left(D_{\mathbf{L}} \mathcal{S}\right)^{-1}(\pmb{\Delta}).
 \]
\end{theorem}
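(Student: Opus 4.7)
The plan is to treat $\mathcal{S}$ as the map $\mathbf{L} \mapsto \mathbf{L}\mathbf{L}^T$ from lower-triangular matrices to symmetric matrices, and then invoke the inverse function theorem to transfer differentiability to $\mathcal{C} = \mathcal{S}^{-1}$. The claimed formula for $D_{\mathbf{L}}\mathcal{S}$ falls out immediately from the definition: expanding
\[
    \mathcal{S}(\mathbf{L} + \pmb{\Gamma}) = (\mathbf{L} + \pmb{\Gamma})(\mathbf{L} + \pmb{\Gamma})^T = \mathbf{L}\mathbf{L}^T + \bigl(\pmb{\Gamma}\mathbf{L}^T + \mathbf{L}\pmb{\Gamma}^T\bigr) + \pmb{\Gamma}\pmb{\Gamma}^T,
\]
the quadratic remainder $\pmb{\Gamma}\pmb{\Gamma}^T$ vanishes faster than $\|\pmb{\Gamma}\|$, so the Fr\'echet derivative is $D_{\mathbf{L}}\mathcal{S}(\pmb{\Gamma}) = \pmb{\Gamma}\mathbf{L}^T + \mathbf{L}\pmb{\Gamma}^T$ as claimed. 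Note that $D_{\mathbf{L}}\mathcal{S}$ maps the $D$-dimensional space of lower-triangular matrices into the $D$-dimensional space of symmetric matrices (since the image is manifestly symmetric), so full-rank is equivalent to injectivity.

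The heart of the argument, and the main technical step, is proving this injectivity. I would introduce the change of variables $\mathbf{Q} = \mathbf{L}^{-1}\pmb{\Gamma}$, which is well-defined because positive-definiteness of $\mathbf{A}$ forces $\mathbf{L}$ to have strictly positive diagonal entries and hence be invertible. Substituting $\pmb{\Gamma} = \mathbf{L}\mathbf{Q}$ into the identity $\pmb{\Gamma}\mathbf{L}^T + \mathbf{L}\pmb{\Gamma}^T = 0$ and factoring yields $\mathbf{L}(\mathbf{Q} + \mathbf{Q}^T)\mathbf{L}^T = 0$; since $\mathbf{L}$ is invertible, this forces $\mathbf{Q}$ to be skew-symmetric. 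On the other hand, $\mathbf{Q}$ is a product of two lower-triangular matrices ($\mathbf{L}^{-1}$ and $\pmb{\Gamma}$) and is therefore itself lower-triangular. A matrix that is simultaneously lower-triangular and skew-symmetric must vanish (its strict upper triangle is zero by triangularity, its diagonal is zero by skew-symmetry, and its strict lower triangle is then the negative transpose of its strict upper triangle). Hence $\mathbf{Q} = 0$ and therefore $\pmb{\Gamma} = 0$, establishing injectivity and thus full-rankness of $D_{\mathbf{L}}\mathcal{S}$.

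For the final statement, I would appeal to the inverse function theorem for Banach spaces: since $\mathcal{S}$ is a polynomial (in particular smooth) map between finite-dimensional spaces of equal dimension, and $D_{\mathbf{L}}\mathcal{S}$ is invertible at every positive-definite $\mathbf{L}$, the map $\mathcal{S}$ admits a local smooth inverse $\mathcal{C}$ on a neighborhood of $\mathbf{A} = \mathbf{L}\mathbf{L}^T$ in the space of symmetric matrices, and its derivative satisfies $D_{\mathbf{A}}\mathcal{C} = (D_{\mathbf{L}}\mathcal{S})^{-1}$. Applying this to an arbitrary symmetric perturbation $\pmb{\Delta}$ yields the stated formula $D_{\mathbf{A}}\mathcal{C}(\pmb{\Delta}) = (D_{\mathbf{L}}\mathcal{S})^{-1}(\pmb{\Delta})$. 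The key obstacle was the injectivity step above; once that is in hand, the rest is a bookkeeping consequence of the chain rule applied to $\mathcal{S} \circ \mathcal{C} = \mathrm{id}$.
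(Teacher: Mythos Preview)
Your proof is correct and follows essentially the same approach as the paper: compute $D_{\mathbf{L}}\mathcal{S}$ by direct expansion, establish injectivity via the substitution $\mathbf{Q}=\mathbf{L}^{-1}\pmb{\Gamma}$ and the observation that a lower-triangular skew-symmetric matrix must vanish, then invoke the inverse function theorem. The paper's injectivity argument is phrased slightly differently (it first deduces $\mathbf{L}^{-1}\pmb{\Gamma}$ is diagonal from being simultaneously lower- and upper-triangular, then separately argues the diagonal vanishes), but your ``lower-triangular $\cap$ skew-symmetric $=\{0\}$'' step is a clean consolidation of the same reasoning.
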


To key to establishing this result is the observation that $\mathcal{C}$ is the
inverse of the mapping $\mathcal{S} : \mathbf{L} \mapsto \mathbf{LL}^T$ that
maps the set of lower-triangular matrices into the set of symmetric matrices.
Differentiability of $\mathcal{C}$ is then a consequence of the following
corollary of the Inverse Function Theorem (Theorem~2.5.2 of \cite{Marsden}).
\begin{theorem}
    \label{thm:chainrule}
    Let $f : X \rightarrow Y$ be continuously differentiable. If $Df(\mathbf{u})$ is invertible, then
    \[
        D(f^{-1})_{f(\mathbf{u})} = \left(Df_{\mathbf{u}}\right)^{-1}.
    \]
\end{theorem}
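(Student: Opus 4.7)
The plan is to derive this corollary from the Inverse Function Theorem (Theorem~2.5.2 of \cite{Marsden}) together with the chain rule for Fr\'echet derivatives. The Inverse Function Theorem does most of the heavy lifting: it guarantees that, under the hypothesis that $f$ is continuously differentiable and $Df_{\mathbf{u}}$ is invertible, there exist open neighborhoods $U \ni \mathbf{u}$ and $V \ni f(\mathbf{u})$ such that $f : U \rightarrow V$ is a bijection with a continuously differentiable inverse $f^{-1} : V \rightarrow U$. In particular, $D(f^{-1})_{f(\mathbf{u})}$ exists as a bounded linear map from $Y$ to $X$, and the entire content of the corollary beyond this existence statement is the identification of this derivative.

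Once differentiability of $f^{-1}$ at $f(\mathbf{u})$ is in hand, I would extract the formula by differentiating the defining identity
\[
  f^{-1}(f(\mathbf{x})) = \mathbf{x} \quad \text{for all } \mathbf{x} \in U.
\]
Applying the chain rule (which the paper has already noted is valid for Fr\'echet derivatives) at $\mathbf{x} = \mathbf{u}$ gives
\[
  D(f^{-1})_{f(\mathbf{u})} \circ Df_{\mathbf{u}} = \mathrm{id}_X.
\]
Since $Df_{\mathbf{u}}$ is invertible by hypothesis, I would compose on the right with $\left(Df_{\mathbf{u}}\right)^{-1}$ to obtain the claimed identity
\[
  D(f^{-1})_{f(\mathbf{u})} = \left(Df_{\mathbf{u}}\right)^{-1}.
\]

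The main obstacle, if one wished to give a self-contained proof rather than citing Marsden, would be establishing the Inverse Function Theorem itself in this Banach-space setting. That step is nontrivial: it proceeds by showing that the map $\mathbf{y} \mapsto \mathbf{x} + \left(Df_{\mathbf{u}}\right)^{-1}(\mathbf{y} - f(\mathbf{x}))$ is a contraction on a small ball about $f(\mathbf{u})$, invoking the Banach fixed-point theorem to produce $f^{-1}$ locally, and then verifying differentiability by estimating $\|f^{-1}(\mathbf{y} + \pmb{\delta}) - f^{-1}(\mathbf{y}) - \left(Df_{\mathbf{u}}\right)^{-1}(\pmb{\delta})\|_X$ using the defining limit for $Df_{\mathbf{u}}$. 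Since the statement directly cites Marsden's theorem, my proof would simply quote that existence result and present only the two-line chain-rule computation above, which is the part of the argument that produces the explicit formula needed for the applications to $\mathcal{C}$ in the subsequent sections.
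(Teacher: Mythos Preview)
Your proposal is correct, but note that the paper does not actually supply a proof of this theorem: it is stated as a direct corollary of the Inverse Function Theorem (Theorem~2.5.2 of \cite{Marsden}) and then immediately used, without further argument, in the proof of Theorem~\ref{thm:derivofcholesky}. Your chain-rule derivation from $f^{-1}\circ f = \mathrm{id}_X$ is exactly the standard way to extract the formula once the Inverse Function Theorem provides local differentiability of $f^{-1}$, so your write-up is consistent with---and slightly more explicit than---the paper's treatment.
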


\begin{proof}[Proof of Theorem~\ref{thm:derivofcholesky}]
    By Theorem~\ref{thm:chainrule} and the fact that $\mathcal{C} = \mathcal{S}^{-1},$ it suffices to establish that
    $\mathcal{S}$ is continuously differentiable with $D\mathcal{S}_{\mathbf{L}}(\pmb{\Gamma}) = \pmb{\Gamma} \mathbf{L}^T + \mathbf{L} \pmb{\Gamma}^T$ and
    that $D\mathcal{S}_{\mathbf{L}}$ is invertible.

    Recall that (assuming it exists), $D\mathcal{S}_{\mathbf{L}}$ is the
    unique linear map tangent to $\mathcal{S}$ at $\mathbf{L}$ and observe that
    \begin{align*}
        \mathcal{S}(\mathbf{L} + \pmb{\Gamma}) & = \mathbf{LL}^T + \pmb{\Gamma} \mathbf{L}^T + \mathbf{L} \pmb{\Gamma} + \pmb{\Gamma}\pmb{\Gamma}^T \\
                                               & = \mathcal{S}(\mathbf{L}) + \pmb{\Gamma} \mathbf{L}^T + \mathbf{L} \pmb{\Gamma} + \mathrm{O}(\|\pmb{\Gamma}\|^2).
    \end{align*}
    It follows that $D\mathcal{S}_{\mathbf{L}}$ exists and is as stated. Clearly $D\mathcal{S}_{\mathbf{L}}$ is also continuous as a function of $\mathbf{L},$ so $\mathcal{S}$ is continuously differentiable.

    To show that $D\mathcal{S}_{\mathbf{L}}$ is invertible, assume that $\pmb{\Gamma}$ is such that $D\mathcal{S}_{\mathbf{L}}(\pmb{\Gamma}) = \pmb{\Gamma} \mathbf{L}^T + \mathbf{L} \pmb{\Gamma}^T = \mathbf{0}.$ Because $\mathbf{A}$ is positive-definite, $\mathbf{L}$ is invertible, and we can conclude that $\mathbf{L}^{-1} \pmb{\Gamma} = -\pmb{\Gamma}^T (\mathbf{L}^{-1})^T.$ The left hand side is a lower triangular matrix since $\mathbf{L}^{-1}$ and $\pmb{\Gamma}$ are lower-triangular; for similar reasons, the the right hand side is upper-triangular. It follows that $\mathbf{L}^{-1} \pmb{\Gamma}$ is a diagonal matrix, and hence $\pmb{\Gamma} = \mathbf{L} \mathbf{D}$ for some diagonal matrix $\mathbf{D}.$ Together with the assumption that $\pmb{\Gamma}\mathbf{L}^T + \mathbf{L} \pmb{\Gamma}^T = \mathbf{0},$ this implies that $2 \mathbf{L D L}^T = \pmb{\Gamma} \mathbf{L}^T + \mathbf{L} \pmb{\Gamma}^T = \mathbf{0}$, and consequently $\mathbf{D} = \mathbf{0}.$ Thus $\pmb{\Gamma} = \mathbf{0},$ so we have established that the nullspace of $D\mathcal{S}_{\mathbf{L}}$ is $\mathbf{0}.$ It follows that $D\mathcal{S}_{\mathbf{L}}$ is invertible.

    The claims of Theorem~\ref{thm:derivofcholesky} now follow.
\end{proof}

The higher-order derivatives of $\mathcal{C}$ are cumbersome, so instead of dealing directly with $\mathcal{C}$, which maps matrices to matrices, we compute the higher-order derivatives of the equivalent function $C = \mathrm{vec} \circ \mathcal{C} \circ \mathrm{vec}^{-1}$ that maps vectors to vectors. The following theorem gives the first three derivatives of $C.$

\begin{theorem}
    \label{thm:derivsofC}
    Let $X, Y \subset \mathbb{R}^{d^2}$ be the image under $\vec{\cdot}$ of, respectively, the set of positive-definite matrices of order $d$ and the space of lower-triangular matrices of order $d.$ Define $C: X \rightarrow Y$ by
    \[
        C = \mathrm{vec} \circ \mathcal{C} \circ \mathrm{vec}^{-1}.
    \]

    When $\mathbf{A}$ is positive-definite, the first three derivatives of $C$ at $\mathbf{v_A}$ are given by
    \begin{align*}
        D_{\mathbf{v_A}}C(\pmb{\delta}_1) & = \mathbf{M}^{-1} \pmb{\delta}_1, \\
        D_{\mathbf{v_A}}^2C(\pmb{\delta}_1, \pmb{\delta}_2) &= -\mathbf{M}^{-1} \cdot \llbracket \mathrm{vec}^{-1}(\mathbf{M}^{-1} \pmb{\delta}_1)\rrbracket \cdot \mathbf{M}^{-1} \pmb{\delta}_2, \\
        D_{\mathbf{v_A}}^3C(\pmb{\delta}_1, \pmb{\delta}_2, \pmb{\delta}_3) &= \mathbf{M}^{-1} \cdot \Big( \llbracket \mathrm{vec}^{-1}( \mathbf{M}^{-1} \pmb{\delta}_1) \rrbracket \cdot \mathbf{M}^{-1} \cdot \llbracket \mathrm{vec}^{-1}(\mathbf{M}^{-1} \pmb{\delta}_2) \rrbracket \\
        & \quad\quad\quad\quad+ \llbracket \mathrm{vec}^{-1}( \mathbf{M}^{-1} \cdot \llbracket \mathrm{vec}^{-1}(\mathbf{M}^{-1} \pmb{\delta}_1)\rrbracket \cdot \mathbf{M}^{-1}\pmb{\delta}_2)\rrbracket \\
        & \quad\quad\quad\quad+ \llbracket \mathrm{vec}^{-1}(\mathbf{M}^{-1}\pmb{\delta}_2) \rrbracket \cdot \mathbf{M}^{-1} \cdot \llbracket \mathrm{vec}^{-1}(\mathbf{M}^{-1}\pmb{\delta}_1))\rrbracket \Big) \cdot \mathbf{M}^{-1} \pmb{\delta}_3,
\end{align*}
    where $\mathbf{M} = \llbracket \mathcal{C}(\mathbf{A}) \rrbracket.$
\end{theorem}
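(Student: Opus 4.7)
My plan is to derive all three derivatives by vectorizing Theorem~\ref{thm:derivofcholesky} and then repeatedly applying the chain rule together with the matrix-inverse identity $d(\mathbf{M}^{-1}) = -\mathbf{M}^{-1}\,d\mathbf{M}\,\mathbf{M}^{-1}$. The first derivative is essentially a direct rewrite of Theorem~\ref{thm:derivofcholesky} in vector coordinates: since the map $\pmb{\Gamma} \mapsto \pmb{\Gamma}\mathbf{L}^T + \mathbf{L}\pmb{\Gamma}^T$ is linear in $\pmb{\Gamma}$, it admits a matrix representation $\mathbf{M} = \llbracket \mathbf{L}\rrbracket$ defined by $\mathbf{M}\,\mathrm{vec}(\pmb{\Gamma}) = \mathrm{vec}(\pmb{\Gamma}\mathbf{L}^T + \mathbf{L}\pmb{\Gamma}^T)$; Theorem~\ref{thm:derivofcholesky} guarantees that $\mathbf{M}$ is invertible, and applying the chain rule to $C = \mathrm{vec}\circ\mathcal{C}\circ\mathrm{vec}^{-1}$ immediately yields $D_{\mathbf{v_A}} C(\pmb{\delta}_1) = \mathbf{M}^{-1}\pmb{\delta}_1$.

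For the higher derivatives, the crucial observation is that $\mathbf{M}$ depends on $\mathbf{v_A}$ only through $\mathbf{L} = \mathrm{vec}^{-1}(C(\mathbf{v_A}))$, and the assembly map $\mathbf{L} \mapsto \llbracket\mathbf{L}\rrbracket$ is linear. Combining this linearity with the first-derivative formula gives $D_{\mathbf{v_A}}\mathbf{M}(\pmb{\delta}) = \llbracket\mathrm{vec}^{-1}(\mathbf{M}^{-1}\pmb{\delta})\rrbracket$, and the matrix-inverse identity then produces $D_{\mathbf{v_A}}\mathbf{M}^{-1}(\pmb{\delta}) = -\mathbf{M}^{-1}\llbracket\mathrm{vec}^{-1}(\mathbf{M}^{-1}\pmb{\delta})\rrbracket\mathbf{M}^{-1}$. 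Differentiating $D_{\mathbf{v_A}}C(\pmb{\delta}_2) = \mathbf{M}^{-1}\pmb{\delta}_2$ in direction $\pmb{\delta}_1$ with this identity gives the stated expression for $D^2_{\mathbf{v_A}}C$. The third derivative then follows by applying the product rule to the second-derivative formula $D^2_{\mathbf{v_A}}C(\pmb{\delta}_2,\pmb{\delta}_3) = -\mathbf{M}^{-1}\llbracket\mathrm{vec}^{-1}(\mathbf{M}^{-1}\pmb{\delta}_2)\rrbracket\mathbf{M}^{-1}\pmb{\delta}_3$ and differentiating in direction $\pmb{\delta}_1$: each of the three $\mathbf{v_A}$-dependent factors (the outer $\mathbf{M}^{-1}$, the middle $\llbracket\cdot\rrbracket$ block, and the $\mathbf{M}^{-1}$ that multiplies $\pmb{\delta}_3$) contributes exactly one of the summands $T_1$, $T_2$, $T_3$ in the claimed formula.

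The main obstacle is bookkeeping rather than any deep conceptual step. The middle term $\llbracket\mathrm{vec}^{-1}(\mathbf{M}^{-1}\pmb{\delta}_2)\rrbracket$ depends on $\mathbf{v_A}$ through its argument $\mathbf{M}^{-1}\pmb{\delta}_2$, so differentiating it requires one more invocation of the chain rule inside $\mathrm{vec}^{-1}$, leveraging the linearity of both $\mathrm{vec}^{-1}$ and $\llbracket\cdot\rrbracket$; this is what produces the nested $\mathrm{vec}^{-1}(\mathbf{M}^{-1}\llbracket\cdots\rrbracket\mathbf{M}^{-1}\pmb{\delta}_2)$ appearing in $T_2$. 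One must also check that the signs conspire correctly: the leading minus of $D^2C$ is cancelled in each summand by the minus sign introduced by $d(\mathbf{M}^{-1})$, and in the middle term the additional minus produced by differentiating the inner $\mathbf{M}^{-1}$ is passed through $\llbracket\mathrm{vec}^{-1}(\cdot)\rrbracket$ by linearity, yielding a second cancellation so that $T_1$, $T_2$, and $T_3$ all appear with positive sign, matching the theorem.
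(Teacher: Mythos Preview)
Your proposal is correct and follows essentially the same approach as the paper's proof: both vectorize Theorem~\ref{thm:derivofcholesky} to obtain the first derivative, then use the identity $D(\mathbf{M}^{-1}) = -\mathbf{M}^{-1}(D\mathbf{M})\mathbf{M}^{-1}$ together with the linearity of $\mathbf{L}\mapsto\llbracket\mathbf{L}\rrbracket$ to compute $D_{\mathbf{v_A}}\mathbf{M}$ and $D_{\mathbf{v_A}}\mathbf{M}^{-1}$, and finally apply the product rule to the three $\mathbf{v_A}$-dependent factors of $D^2C$ to obtain the third derivative. Your anticipation of the sign cancellations and the nested $\mathrm{vec}^{-1}$ in the middle summand matches the paper's computation exactly.
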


%Since $\vec{\cdot}$ is a distance-preserving linear map between the spaces of $n \times n$ matrices and $n^2$ vectors
% \[
%  \|\mathcal{C}(\mathbf{A}) - \mathcal{C}(\mathbf{B})\|_F = \|C(\mathbf{v_A}) - C(\mathbf{v_B})\|_2.
% \]
\begin{proof}
    First we convert the expression for $D_{\mathbf{A}}\mathcal{C}$ given in Theorem~\ref{thm:derivofcholesky} into an expression for $D_{\mathbf{v_A}}C.$ To do so, we note that $\vec{\cdot}$ and $\vec{\cdot}^{-1}$ are linear functions, so are their own derivatives. It follows from the Chain Rule (Theorem~2.4.3 of \cite{Marsden}) that
    \[
        DC = \mathrm{vec} \circ D\mathcal{C} \circ \mathrm{vec}^{-1},
    \]
    so
    \[
        D_{\mathbf{v_A}}C(\mathbf{v}_{\pmb{\Delta}}) = \vec{D_{\mathbf{A}}\mathcal{C}(\pmb{\Delta})}.
    \]
    By Theorem~\ref{thm:derivofcholesky}, $D_{\mathbf{A}}\mathcal{C}(\pmb{\Delta}) = \pmb{\Gamma},$ where $\pmb{\Gamma}$ is the solution to the equation
    \[
        \pmb{\Delta} = \pmb{\Gamma} \mathcal{C}(\mathbf{A})^T + \mathcal{C}(\mathbf{A}) \pmb{\Gamma}^T.
    \]
    We can convert this to an equation for $\mathbf{v}_{\pmb{\Gamma}}$ using the fact (Section~10.2.2 of \cite{MatrixCookbook}) that
\begin{equation*}
 \vec{\mathbf{ABC}} = (\mathbf{C}^T\otimes \mathbf{A}) \vec{\mathbf{B}}
\end{equation*}
for arbitrary matrices $\mathbf{A}, \mathbf{B},$ and $\mathbf{C}.$ Specifically, we find that $\mathbf{v}_{\pmb{\Gamma}}$ satisfies
\[
    \mathbf{v}_{\pmb{\Delta}} = (\mathcal{C}(\mathbf{A}) \otimes \mathbf{I})\mathbf{v}_{\pmb{\Gamma}} + (\mathbf{I} \otimes \mathcal{C}(\mathbf{A})) \mathbf{v}_{\pmb{\Gamma}^T}.
\]
Recall that $\pmb{\Delta}$ is symmetric, so $\mathbf{v}_{\pmb{\Gamma}^T} = \mathbf{v}_{\pmb{\Gamma}}$ and we have that
\[
    \mathbf{v}_{\pmb{\Gamma}} = \llbracket \mathcal{C}(\mathbf{A}) \rrbracket^{-1} \mathbf{v}_{\pmb{\Delta}}.
\]
It follows that
\[
    D_{\mathbf{v_A}}C(\mathbf{v}_{\pmb{\Delta}}) = \vec{D_{\mathbf{A}}\mathcal{C}(\pmb{\Delta})} = \vec{\pmb{\Gamma}} = v_{\pmb{\Gamma}} = \llbracket \mathcal{C}(\mathbf{A}) \rrbracket^{-1} \mathbf{v}_{\pmb{\Delta}} = \mathbf{M}^{-1} \mathbf{v}_{\pmb{\Delta}}
\]
as claimed.

To compute the second derivative of $C,$ we use the identity (Section~2.2 of \cite{MatrixCookbook})
\begin{equation}
\label{eqn:derivofinverse}
    D_{\mathbf{x}} A(\mathbf{x})^{-1} = -A(\mathbf{x})^{-1} \cdot D_\mathbf{x} A(\mathbf{x}) \cdot A(\mathbf{x})^{-1}
\end{equation}
that holds for any differentiable matrix-valued function of $\mathbf{x}.$ Using this identity with $\mathbf{x} = \mathbf{v_{\mathbf{A}}}$ and $A = \mathbf{M},$ we see that
\begin{align*}
    D^2_{\mathbf{v_A}}C(\pmb{\delta}_1, \pmb{\delta}_2) = D_{\mathbf{v_A}} ( D_{\mathbf{v_A}}C (\pmb{\delta}_2) ) (\pmb{\delta}_1) & = D_{\mathbf{v_A}} ( \mathbf{M}^{-1} \pmb{\delta}_2 ) (\pmb{\delta}_1) = D_{\mathbf{v_A}} (\mathbf{M}^{-1})(\pmb{\delta}_1) \cdot \pmb{\delta}_2 \\
                                                                             & = -\mathbf{M}^{-1} \cdot D_{\mathbf{v_A}} \mathbf{M} (\pmb{\delta}_1) \cdot \mathbf{M}^{-1} \cdot \pmb{\delta}_2.
\end{align*}
Using the Chain Rule and the linearity of $\llbracket \mathrm{vec}^{-1}(\cdot) \rrbracket,$ we see that
\begin{align}
    D_{\mathbf{v_A}} \mathbf{M} (\pmb{\delta}_1) &= D_{\mathbf{v_A}} \llbracket \mathcal{C}(\mathbf{A}) \rrbracket(\pmb{\delta}_1) = D_{\mathbf{v_A}} \llbracket \mathrm{vec}^{-1}(C(\mathbf{v_A})) \rrbracket (\pmb{\delta}_1) \notag\\
    \label{eqn:derivofinverseofM}
                                                 & = \left\llbracket \mathrm{vec}^{-1}\left(D_{\mathbf{v_A}}C(\mathbf{v_A})(\pmb{\delta}_1)\right)\right\rrbracket = \llbracket \mathrm{vec}^{-1}(\mathbf{M}^{-1} \pmb{\delta}_1) \rrbracket.
\end{align}
Thus, as claimed,
\[
    D_{\mathbf{v_A}}^2 C (\pmb{\delta}_1, \pmb{\delta}_2) = -\mathbf{M}^{-1} \cdot \llbracket \mathrm{vec}^{-1}(\mathbf{M}^{-1} \pmb{\delta}_1) \rrbracket \cdot \mathbf{M}^{-1} \pmb{\delta}_2.
\]

To compute the third derivative of $C,$ we note that the Product Rule (Theorem~2.4.4 of \cite{Marsden}) implies
\[
    D_{\mathbf{x}} [A_1(\mathbf{x}) \cdot A_2(\mathbf{x}) \cdot A_1(\mathbf{x})] = D_{\mathbf{x}} A_1(\mathbf{x}) \cdot A_2(\mathbf{x}) \cdot A_1(\mathbf{x}) + A_1(\mathbf{x}) \cdot D_{\mathbf{x}} A_2(\mathbf{x}) \cdot A_1(\mathbf{x}) + A_1(\mathbf{x}) \cdot A_2(\mathbf{x}) \cdot D_{\mathbf{x}} A_1(\mathbf{x})
\]
for any matrix-valued differentiable functions $M_1$ and $M_2.$ We apply this result with $\mathbf{x} = \mathbf{v_A},$ $A_1 = \mathbf{M}^{-1},$ and $A_2 = \llbracket \mathrm{vec}^{-1}(\mathbf{M}^{-1} \pmb{\delta}_2) \rrbracket$ to see that
\begin{align}
    D_{\mathbf{v_A}}^3 C(\pmb{\delta}_1, \pmb{\delta}_2, \pmb{\delta}_3) = D_{\mathbf{v_A}} ( D_{\mathbf{v_A}}^2 C(\pmb{\delta}_2, \pmb{\delta}_3) ) (\pmb{\delta}_1) & = - D_{\mathbf{v_A}} \mathbf{M}^{-1} (\pmb{\delta}_1) \cdot \llbracket \mathrm{vec}^{-1} (\mathbf{M}^{-1} \pmb{\delta}_2) \rrbracket \cdot \mathbf{M}^{-1} \pmb{\delta}_3 \notag\\
                                                                                               & - \mathbf{M}^{-1} \cdot D_{\mathbf{v_A}} \llbracket \mathrm{vec}^{-1} (\mathbf{M}^{-1} \pmb{\delta}_2) \rrbracket (\pmb{\delta}_1) \cdot \mathbf{M}^{-1} \pmb{\delta}_3 \notag\\
\label{eqn:D3expr}                                                                                               & - \mathbf{M}^{-1} \cdot \llbracket \mathrm{vec}^{-1} (\mathbf{M}^{-1} \pmb{\delta}_2) \rrbracket \cdot D_{\mathbf{v_A}} \mathbf{M}^{-1} (\pmb{\delta}_1) \cdot \pmb{\delta}_3.
\end{align}
From~\eqref{eqn:derivofinverse} and~\eqref{eqn:derivofinverseofM}, we calculate
\[
D_{\mathbf{v_A}}\mathbf{M}^{-1}(\pmb{\delta}_1) = -\mathbf{M}^{-1} \cdot D_{\mathbf{v_A}} \mathbf{M}(\pmb{\delta}_1) \cdot \mathbf{M}^{-1} = -\mathbf{M}^{-1} \cdot \llbracket \mathrm{vec}^{-1}(\mathbf{M}^{-1} \pmb{\delta}_1) \rrbracket \cdot \mathbf{M}^{-1},
\]
and to calculate $D_{\mathbf{v_A}}\llbracket \mathrm{vec}^{-1}(\mathbf{M}^{-1} \pmb{\delta}_2) \rrbracket (\pmb{\delta}_1),$ we use the fact that $\llbracket \mathrm{vec}^{-1}(\cdot)\rrbracket$ is linear:
\begin{align*}
D_{\mathbf{v_A}} \llbracket \mathrm{vec}^{-1}( \mathbf{M}^{-1} \pmb{\delta}_2) \rrbracket (\pmb{\delta}_1) & =  \llbracket \mathrm{vec}^{-1} ( ( D_{\mathbf{v_A}} \mathbf{M}^{-1} )(\pmb{\delta}_1) \cdot \pmb{\delta}_2 ) \rrbracket \\
 &= \llbracket \mathrm{vec}^{-1}( -\mathbf{M}^{-1} \cdot \llbracket\mathrm{vec}^{-1} (\mathbf{M}^{-1} \pmb{\delta}_1 ) \rrbracket \cdot \mathbf{M}^{-1} \cdot \pmb{\delta}_2 ) \rrbracket.
\end{align*}
The claimed expression for $D_{\mathbf{v_A}}^3C$ follows from using these latter two computations to expand~\eqref{eqn:D3expr}.
\end{proof}

Now that we have the first three derivatives of $\mathcal{C},$ we can develop
the second-order Taylor Series expansion of $\mathcal{C}(\mathbf{A})$ and bound
the error of the approximation.
\begin{theorem}
\label{thm:tsapprox}
Assume $\mathbf{A}$ is an positive-definite matrix of order $d+1$ and let 
$\mathbf{v_I} = \vec{\mathbf{I}},$ $\mathbf{M}_s = \llbracket \mathcal{C}(\mathbf{A} + s\mathbf{I}) \rrbracket,$ 
and $\mathbf{E}_s = \llbracket \mathrm{vec}^{-1}(\mathbf{M}_s^{-1} \mathbf{v_I}) \rrbracket.$ 
The second-order Taylor Series approximation to $\mathcal{C}(\mathbf{A} + \lambda \mathbf{I})$ at $\lambda = \lambda_{\textrm{c}}$ is
\[
  \mathbf{p}_{\mathrm{TS}}(\lambda ; \lambda_{\textrm{c}}) = 
    \mathcal{C}(\mathbf{A} + \lambda_{\textrm{c}} \mathbf{I}) + 
    \mathrm{vec}^{-1}\left( (\lambda - \lambda_{\textrm{c}}) \mathbf{M}_{\textrm{c}}^{-1} \mathbf{v_I} - 
      \frac{(\lambda - \lambda_{\textrm{c}})^2}{2} \mathbf{M}_{\textrm{c}}^{-1} \mathbf{E}_{\textrm{c}} \mathbf{M}_{\textrm{c}}^{-1} \mathbf{v_I} \right).
\]
Let $D = (d+1)(d+2)/2;$ then for any $\lambda, \lambda_{\textrm{c}} > 0,$
\[
  \frac{1}{\sqrt{D}} \|\mathcal{C}(\mathbf{A} + \lambda \mathbf{I}) - 
  \mathbf{p}_{\mathrm{TS}}(\lambda ; \lambda_{\textrm{c}})\|_{F} \leq \frac{2|\lambda - \lambda_{\textrm{c}}|^3}{3\sqrt{D}}\mathrm{R}_{[\lambda_{\textrm{c}}, \lambda]}
\]
where
\[
  \mathrm{R}_{[a,b]} := \max_{s \in [\mathrm{min}(a,b), \mathrm{max}(a,b)]} \left( \|\mathbf{M}_s^{-1} \mathbf{E}_s\|_2^2 \|\mathbf{M}_s^{-1} \mathbf{v_I}\|_2 + \|\mathbf{M}_s^{-1}\|_2 \|\mathbf{M}_s^{-1}\mathbf{E}_s\|_2 \|\mathbf{M}_s^{-1} \mathbf{v_I}\|_2^2 \right).
\]
\end{theorem}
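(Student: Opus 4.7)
The plan is to reduce the problem to a one-dimensional Taylor expansion of the vectorized Cholesky curve $h(\lambda) := C(\mathbf{v_A} + \lambda\mathbf{v_I}) = \vec{\mathcal{C}(\mathbf{A} + \lambda\mathbf{I})}$ inside the ambient vector space, which is smooth on any interval where $\mathbf{A} + \lambda\mathbf{I}$ remains positive-definite. The chain rule collapses $h^{(k)}(\lambda)$ to $D^k_{\mathbf{v_A} + \lambda\mathbf{v_I}}C(\mathbf{v_I},\ldots,\mathbf{v_I})$, so every derivative we need is supplied by Theorem~\ref{thm:derivsofC} after we substitute $\pmb{\delta}_i = \mathbf{v_I}$.

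To obtain the formula for $\mathbf{p}_{\mathrm{TS}}$, I would specialize the first two derivatives at $\lambda = \lambda_{\textrm{c}}$. This immediately yields $h'(\lambda_{\textrm{c}}) = \mathbf{M}_{\textrm{c}}^{-1}\mathbf{v_I}$ and, after identifying $\llbracket \mathrm{vec}^{-1}(\mathbf{M}_{\textrm{c}}^{-1}\mathbf{v_I}) \rrbracket$ with $\mathbf{E}_{\textrm{c}}$, $h''(\lambda_{\textrm{c}}) = -\mathbf{M}_{\textrm{c}}^{-1}\mathbf{E}_{\textrm{c}}\mathbf{M}_{\textrm{c}}^{-1}\mathbf{v_I}$. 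Plugging these into the standard second-order Taylor polynomial for $h$ and applying $\mathrm{vec}^{-1}$ (which is linear) reproduces the stated expression for $\mathbf{p}_{\mathrm{TS}}(\lambda;\lambda_{\textrm{c}})$.

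For the remainder, I would use the integral form of Taylor's theorem,
\[
  h(\lambda) - \vec{\mathbf{p}_{\mathrm{TS}}(\lambda;\lambda_{\textrm{c}})} = \frac{1}{2}\int_{\lambda_{\textrm{c}}}^{\lambda} (\lambda - s)^2\, h'''(s)\, ds,
\]
combined with the isometry $\|\vec{\mathbf{X}}\|_2 = \|\mathbf{X}\|_F$, which identifies the $\ell_2$ norm of the left-hand side with $\|\mathcal{C}(\mathbf{A} + \lambda\mathbf{I}) - \mathbf{p}_{\mathrm{TS}}\|_F$. Standard estimation then gives a bound of the form $\textrm{const}\cdot|\lambda - \lambda_{\textrm{c}}|^3 \sup_s \|h'''(s)\|_2$, so the whole task reduces to controlling $\|h'''(s)\|_2$.

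The main obstacle is bounding $\|h'''(s)\|_2$ by a constant multiple of $\mathrm{R}_{[\lambda_{\textrm{c}},\lambda]}$, since the third-derivative formula from Theorem~\ref{thm:derivsofC} is a nested product that must be factored just so. Substituting $\pmb{\delta}_i = \mathbf{v_I}$, the first and third summands coincide by symmetry and collapse to $2\mathbf{M}_s^{-1}\mathbf{E}_s\mathbf{M}_s^{-1}\mathbf{E}_s\mathbf{M}_s^{-1}\mathbf{v_I}$, whose norm is bounded by $2\|\mathbf{M}_s^{-1}\mathbf{E}_s\|_2^{\,2}\|\mathbf{M}_s^{-1}\mathbf{v_I}\|_2$ via repeated submultiplicativity. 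The middle summand carries an extra $\llbracket \mathrm{vec}^{-1}(\cdot) \rrbracket$; writing this operator as $\mathbf{X}\otimes\mathbf{I} + \mathbf{I}\otimes\mathbf{X}$ gives $\|\llbracket\mathbf{X}\rrbracket\|_2 \leq 2\|\mathbf{X}\|_2$, and combined with $\|\mathrm{vec}^{-1}(\mathbf{y})\|_2 \leq \|\mathbf{y}\|_2$ this produces the second term of $\mathrm{R}_{[a,b]}$ up to constants. Taking the supremum over $s \in [\min(\lambda_{\textrm{c}},\lambda), \max(\lambda_{\textrm{c}},\lambda)]$ and folding the resulting bound into the $|\lambda - \lambda_{\textrm{c}}|^3$ prefactor finishes the argument; the only real work is this bookkeeping, after which the rest is standard one-variable Taylor analysis.
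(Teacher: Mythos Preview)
Your proposal is correct and follows essentially the same route as the paper: specialize the derivative formulas of Theorem~\ref{thm:derivsofC} in the direction $\mathbf{v_I}$, read off the Taylor polynomial, and bound the integral remainder via the Kronecker-sum estimate $\|\llbracket\mathbf{X}\rrbracket\|_2 \le 2\|\mathbf{X}\|_2 \le 2\|\mathbf{v_X}\|_2$. The only cosmetic difference is that the paper invokes a Fr\'echet-space Taylor theorem (whose remainder is written as $\tfrac{1}{2}\int_0^1 (1-t)^2(D^3_{\mathbf{u}+t\mathbf{h}} - D^3_{\mathbf{u}})\,dt$ and then bounded by a triangle inequality), whereas you pass immediately to the one-variable curve $h(\lambda)$ and use the standard Lagrange remainder; your version is slightly cleaner and in fact yields the sharper constant $\tfrac{1}{3}$ in place of $\tfrac{2}{3}$, which of course still implies the stated bound.
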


To establish this result, we use the following version of Taylor's Theorem (Theorem~2.4.15 of \cite{Marsden}), stated for the case where the first three derivatives of $f$ exist and are continuous.
\begin{theorem}
\label{thm:taylor}
Let $f: \mathbf{X} \rightarrow \mathbf{Y}$ be a three-times continously differentiable mapping. For all $\mathbf{u},\mathbf{h} \in \mathbf{X},$
\[
f(\mathbf{u} + \mathbf{h}) = f(\mathbf{u}) + D^1_\mathbf{u}(\mathbf{h}) + \frac{1}{2}D^2_\mathbf{u}(\mathbf{h},\mathbf{h}) + \mathrm{R}(\mathbf{u},\mathbf{h})(\mathbf{h},\mathbf{h},\mathbf{h}),
\]
where
\[
\mathrm{R}(\mathbf{u}, \mathbf{h}) = \frac{1}{2} \int_0^1 (1-t)^2 \left(D^3_{\mathbf{u} + t\mathbf{h}} - D^3_{\mathbf{u}} \right)\,dt.
\]
\end{theorem}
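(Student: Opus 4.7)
The plan is to reduce the Banach-space identity to a one-dimensional one by introducing the auxiliary curve $\phi:[0,1]\to\mathbf{Y}$ given by $\phi(t)=f(\mathbf{u}+t\mathbf{h})$. Since $t\mapsto\mathbf{u}+t\mathbf{h}$ is affine with constant derivative $\mathbf{h}$, iterated application of the chain rule (Theorem~\ref{thm:chainrule} and its higher-order analogues) yields $\phi^{(k)}(t)=D^{k}_{\mathbf{u}+t\mathbf{h}}f(\mathbf{h},\ldots,\mathbf{h})$ for $k=1,2,3$. The hypothesis that $f$ is three times continuously Fr\'echet differentiable then guarantees that each $\phi^{(k)}$ is a continuous $\mathbf{Y}$-valued function on $[0,1]$, which is exactly what is needed for the Bochner integrals appearing below to be well-defined.

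Starting from the fundamental theorem of calculus $\phi(1)-\phi(0)=\int_{0}^{1}\phi'(t)\,dt$, I would perform two successive integrations by parts, pairing $\phi'$ and $\phi''$ with the antiderivatives $-(1-t)$ and $-\tfrac{1}{2}(1-t)^2$, respectively. The boundary terms at $t=1$ vanish because of the $(1-t)^{k}$ factors, while those at $t=0$ contribute $\phi'(0)$ and $\tfrac{1}{2}\phi''(0)$. What remains is the standard integral-remainder identity
\[
\phi(1)=\phi(0)+\phi'(0)+\tfrac{1}{2}\phi''(0)+\tfrac{1}{2}\int_{0}^{1}(1-t)^2 \phi'''(t)\,dt.
\]

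To match the specific form of $\mathrm{R}(\mathbf{u},\mathbf{h})$ stated in the theorem, I would split the integrand as $\phi'''(t)=D^{3}_{\mathbf{u}}f(\mathbf{h},\mathbf{h},\mathbf{h})+\bigl(D^{3}_{\mathbf{u}+t\mathbf{h}}-D^{3}_{\mathbf{u}}\bigr)f(\mathbf{h},\mathbf{h},\mathbf{h})$, evaluate $\tfrac{1}{2}\int_{0}^{1}(1-t)^2\,dt=\tfrac{1}{6}$ on the constant piece, and recognize the leftover as $\mathrm{R}(\mathbf{u},\mathbf{h})(\mathbf{h},\mathbf{h},\mathbf{h})$. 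Substituting back $\phi(1)=f(\mathbf{u}+\mathbf{h})$, $\phi(0)=f(\mathbf{u})$, and $\phi^{(k)}(0)=D^{k}_{\mathbf{u}}f(\mathbf{h},\ldots,\mathbf{h})$ would then produce the claimed expansion.

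The main technical obstacle is justifying the higher-order chain rule and the integration-by-parts formula in the Banach-space setting. Once one observes that $\phi^{(k)}$ is continuous, and hence Bochner-integrable, on the compact interval $[0,1]$, the usual scalar arguments carry through essentially verbatim; the remainder is a bookkeeping exercise in tracking boundary terms and re-expressing the integrand in the form demanded by the statement.
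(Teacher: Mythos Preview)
Your reduction to the scalar curve $\phi(t)=f(\mathbf{u}+t\mathbf{h})$ followed by two integrations by parts is the standard route, and there is no in-paper argument to compare against: the paper does not prove this statement but simply quotes it as Theorem~2.4.15 of~\cite{Marsden}.

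There is, however, a genuine slip at the end of your argument. After correctly splitting $\phi'''(t)$ and evaluating the constant piece as $\tfrac{1}{6}\,D^{3}_{\mathbf{u}}f(\mathbf{h},\mathbf{h},\mathbf{h})$, you assert that substituting back ``would then produce the claimed expansion.'' It does not. What your computation actually yields is
\[
f(\mathbf{u}+\mathbf{h})=f(\mathbf{u})+D^{1}_{\mathbf{u}}f(\mathbf{h})+\tfrac{1}{2}D^{2}_{\mathbf{u}}f(\mathbf{h},\mathbf{h})+\tfrac{1}{6}D^{3}_{\mathbf{u}}f(\mathbf{h},\mathbf{h},\mathbf{h})+\mathrm{R}(\mathbf{u},\mathbf{h})(\mathbf{h},\mathbf{h},\mathbf{h}),
\]
with an additional cubic term that the displayed statement omits. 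This is in fact the form appearing in Marsden; the version printed in the paper drops the $\tfrac{1}{6}D^{3}_{\mathbf{u}}$ contribution. So your method is sound, but you cannot claim it reproduces the stated identity verbatim: the honest conclusion of your own calculation is the formula above, and you should flag the discrepancy rather than silently discard the term you just computed. (As a minor aside, the result you invoke for the chain rule is not Theorem~\ref{thm:chainrule} of the paper, which is an inverse-function statement; the ordinary Fr\'echet chain rule is what you need here.)
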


\begin{proof}[Proof of Theorem~\ref{thm:tsapprox}]
    Let $\mathbf{M}_s = \llbracket \mathcal{C}(\mathbf{A} + s\mathbf{I}) \rrbracket$ and 
    $\mathbf{E}_s = \llbracket \mathrm{vec}^{-1}(\mathbf{M}_s^{-1} \mathbf{v_I}) \rrbracket.$ 
    For convenience, define $\mathbf{M} = \mathbf{M}_{\textrm{c}}$ and $\mathbf{E} = \mathbf{E}_{\textrm{c}}.$
    By Theorem~\ref{thm:derivsofC}, if we take $\mathbf{h} = (\lambda - \lambda_{\textrm{c}}) \mathbf{v_I},$ then
    \begin{align*}
      D_{\mathbf{v_A}}C(\mathbf{h}) & = (\lambda - \lambda_{\textrm{c}}) \mathbf{M}^{-1} \mathbf{v_I}, \\
      D^2_{\mathbf{v_A}}C(\mathbf{h}, \mathbf{h}) & = -(\lambda - \lambda_{\textrm{c}})^2 \mathbf{M}^{-1} \mathbf{E} \mathbf{M}^{-1} \mathbf{v_I} \\
      D^3_{\mathbf{v_A}}C(\mathbf{h}, \mathbf{h}, \mathbf{h}) &= (\lambda - \lambda_{\textrm{c}} )^3 \mathbf{M}^{-1} \Big( 2 \mathbf{E} \mathbf{M}^{-1} \mathbf{E}
    + \llbracket \mathrm{vec}^{-1}(\mathbf{M}^{-1} \mathbf{E} \mathbf{M}^{-1} \mathbf{v_I}) \rrbracket \Big) \mathbf{M}^{-1} \mathbf{v_I},
    \end{align*}
    so the application of Taylor's Theorem to $C$ at $\mathbf{v_A + \lambda_{\textrm{c}} \mathbf{v_I} }$ gives the expansion
    \begin{multline*}
      C(\mathbf{v_A} + \lambda \mathbf{v_I}) = C(\mathbf{v_A} + \lambda_{\textrm{c}} \mathbf{v_I}) + (\lambda - \lambda_{\textrm{c}}) \mathbf{M}^{-1} \mathbf{v_I}
      - \frac{(\lambda - \lambda_{\textrm{c}} )^2}{2} \mathbf{M}^{-1} \mathbf{E} \mathbf{M}^{-1}\mathbf{v_I}  \\
      + (\lambda - \lambda_{\textrm{c}})^3 \mathrm{R}(\mathbf{v_A} + \lambda_\textrm{c} \mathbf{v_I}, 
          (\lambda - \lambda_{\textrm{c}}) \mathbf{v_I})(\mathbf{v_I}, \mathbf{v_I}, \mathbf{v_I}).
    \end{multline*}
    Since $\vec{\cdot}$ is an isometry, \emph{i.e.,} $\|\vec{\mathbf{X}}\|_2 = \|\mathbf{X}\|_F$ for any matrix $\mathbf{X}$, 
    we conclude that the Taylor expansion of the Cholesky factorization map around $\mathbf{A} + \lambda_{\textrm{c}} \mathbf{I}$ is given by
    \begin{multline}
    \label{eqn:cholts}
    \mathcal{C}(\mathbf{A} + \lambda \mathbf{I}) = \mathcal{C}(\mathbf{A} + \lambda_{\textrm{c}} \mathbf{I}) +
    \mathrm{vec}^{-1}\left( (\lambda - \lambda_{\textrm{c}}) \mathbf{M}^{-1} \mathbf{v_I} 
      - \frac{(\lambda - \lambda_{\textrm{c}})^2}{2} \mathbf{M}^{-1} \mathbf{E} \mathbf{M}^{-1} \mathbf{v_I} \right)   \\
    + (\lambda - \lambda_{\textrm{c}})^3 \mathrm{vec}^{-1}(\mathrm{R}(\mathbf{v_A} + \lambda_{\textrm{c}}, 
    (\lambda - \lambda_{\textrm{c}}) \mathbf{v_I}) (\mathbf{v_I}, \mathbf{v_I}, \mathbf{v_I})).
    \end{multline}
     The remainder term can be bounded as follows:
    \begin{align*}
      \|\mathrm{R}(\mathbf{v_A} + \lambda_{\textrm{c}}, (\lambda - \lambda_{\textrm{c}}) \mathbf{v_I}) (\mathbf{v_I}, \mathbf{v_I}, \mathbf{v_I})\|_2 & = 
      \left\| \int_0^1 \frac{(1 - t)^2}{2} (D^3_{\mathbf{v_A} + [(1-t)\lambda_{\textrm{c}} + t\lambda] \mathbf{v_I}}C 
        - D_{\mathbf{v_A} + \lambda_{\textrm{c}}\mathbf{v_I}}^3C)( \mathbf{v_I}, \mathbf{v_I}, \mathbf{v_I}) \,dt \right\|_2 \\
      & \leq \frac{1}{6} \max_{t \in [0,1]} \big\|\big(D^3_{\mathbf{v_A} + [(1-t)\lambda_{\textrm{c}} + t \lambda] \mathbf{v_I}}C 
      - D_{\mathbf{v_A} + \lambda_{\textrm{c}} \mathbf{v_I}}^3C\big)(\mathbf{v_I},\mathbf{v_I},\mathbf{v_I})\big\|_2 \\
      & \leq \frac{1}{3} \max_{s \in [\lambda_{\textrm{c}}, \lambda]} 
        \|D^3_{\mathbf{v_A} + s \mathbf{v_I}} C(\mathbf{v_I}, \mathbf{v_I}, \mathbf{v_I})\|_2 \\
        & \leq \frac{1}{3} \max_{s \in [\lambda_{\textrm{c}}, \lambda]} 
        \left( 2 \|\mathbf{M}_s^{-1} \mathbf{E}_s \mathbf{M}_s^{-1} \mathbf{E}_s \mathbf{M}_s^{-1}\mathbf{v_I}\|_2 \right.\\
        & \hspace{4em} \left. + \|\mathbf{M}_s^{-1} \llbracket \mathrm{vec}^{-1} (\mathbf{M}_s^{-1} \mathbf{E}_s \mathbf{M}_s^{-1} \mathbf{v_I}) \rrbracket
          \mathbf{M}_s^{-1} \mathbf{v_I}\|_2 \right) \\
        & \leq \frac{1}{3} \max_{s \in [\lambda_{\textrm{c}}, \lambda]} 
        \left( 2 \|\mathbf{M}_s^{-1} \mathbf{E}_s\|_2^2 \|\mathbf{M}_s^{-1} \mathbf{v_I}\|_2 \right. \\
        &  \hspace{4em} \left. + \|\mathbf{M}_s^{-1} \|_2 \|\llbracket \mathrm{vec}^{-1}(\mathbf{M}_s^{-1} \mathbf{E}_s \mathbf{M}_s^{-1} \mathbf{v_I} ) \rrbracket\|_2 
          \|\mathbf{M}_s^{-1} \mathbf{v_I}\|_2 \right).
    \end{align*}
    To further simplify this estimate, note that for any matrix $\mathbf{X},$
   \begin{align*}
   \|\llbracket \mathrm{vec}^{-1}(\mathbf{v_X}) \rrbracket\|_2 & = \|\llbracket\mathbf{X}\rrbracket\|_2 = \|\mathbf{I} \otimes \mathbf{X} + \mathbf{X} \otimes \mathbf{I}\|_2 \\
   & \leq \|\mathbf{I} \otimes \mathbf{X}\|_2 + \|\mathbf{X} \otimes \mathbf{I}\|_2 \\
   & \leq 2 \|\mathbf{I}\|_2 \|\mathbf{X}\|_2 \leq 2 \|\mathbf{X}\|_F \\
   & = 2 \|\mathbf{v_X}\|_2.
   \end{align*}
   In particular,
   \[
    \|\llbracket \mathrm{vec}^{-1}( \mathbf{M}_s^{-1} \mathbf{E}_s \mathbf{M}_s^{-1} \mathbf{v_I}) \rrbracket\|_2 \leq 2 \|\mathbf{M}_s^{-1} \mathbf{E}_s \mathbf{M}_s^{-1} \mathbf{v_I}\|_2 \leq 2 \|\mathbf{M_s}^{-1} \mathbf{E}_s\|_2 \|\mathbf{M}_s^{-1} \mathbf{v_I}\|_2.
   \]
   It follows that
   \begin{multline*}
     \|\mathrm{R}(\mathbf{v_A} + \lambda_{\textrm{c}}, (\lambda - \lambda_{\textrm{c}}) \mathbf{v_I}) (\mathbf{v_I}, \mathbf{v_I}, \mathbf{v_I})\|_2\leq
     \frac{2}{3} \max_{s \in [\lambda_{\textrm{c}}, \lambda]} 
     \left( \|\mathbf{M}_s^{-1} \mathbf{E}_s\|_2^2 \|\mathbf{M}_s^{-1} \mathbf{v_I}\|_2  \right. \\
     \left. + \|\mathbf{M}_s^{-1} \|_2 \|\mathbf{M}_s^{-1} \mathbf{E}_s\|_2 \|\mathbf{M}_s^{-1} \mathbf{v_I}\|_2^2 \right),
   \end{multline*}
   where for convenience we use $[a,b]$ to denote $[\mathrm{min}(a,b), \mathrm{max}(a,b)].$
   As a consequence of this bound and~\eqref{eqn:cholts}, we conclude that
   \begin{multline*}
     \frac{1}{\sqrt{D}}\|\mathcal{C}(\mathbf{A} + \lambda \mathbf{I}) - \mathbf{p}_{\mathrm{TS}}(\lambda; \lambda_{\textrm{c}})\|_{F}  \\
     \leq \frac{2|\lambda - \lambda_{\textrm{c}}|^3}{3\sqrt{D}} \max_{s \in [\lambda_{\textrm{c}}, \lambda]} 
     \left( \|\mathbf{M}_s^{-1} \mathbf{E}_s\|_2^2 \|\mathbf{M}_s^{-1} \mathbf{v_I}\|_2 + 
       \|\mathbf{M}_s^{-1} \|_2 \|\mathbf{M}_s^{-1} \mathbf{E}_s\|_2 \|\mathbf{M}_s^{-1} \mathbf{v_I}\|_2^2 \right).
 \end{multline*}

\end{proof}

Our next result quantifies the distance between $\mathbf{p}_{\mathrm{TS}}$ and
$\mathbf{p}_\pi$, the polynomial approximation fit using the
\textit{pi}Cholesky interpolation procedure. The result is based on the
observation that $\mathbf{p}_{\mathrm{TS}}$ can be recovered using the same
algorithm that returns the \textit{pi}Cholesky polynomial $\mathbf{p}_\pi,$ if
samples from $\mathbf{p}_{\mathrm{TS}}$ are used for the interpolation
instead of samples from $\mathcal{C}(\mathbf{A} + \lambda \mathbf{I}).$ Thus
the error $\|\mathbf{p}_{\mathrm{TS}} - \mathbf{p}_{\pi}\|_F$ can be
interpreted as being caused by sampling error, and bounded using results on the
stability of least squares systems.

\begin{theorem}
 \label{thm:interpstability}
 Let $\mathbf{A}$ and $\mathbf{p}_{\mathrm{TS}}$ be as in
 Theorem~\ref{thm:tsapprox} and let $\mathbf{p}_\pi$ be the matrix whose entries
 are the second-order polynomial approximations to the entries of
 $\mathcal{C}(\mathbf{A} + \lambda \mathbf{I})$ with coefficients defined using
 Algorithm~\ref{alg:chol_interp}. Assume that the $g$ sampled regularization points $\lambda_i$ used in Algorithm~\ref{alg:chol_interp} 
 all lie within distance $w$ of $\lambda_{\textrm{c}}.$ With $\mathbf{V}$ and $D$ as in
 Algorithm~\ref{alg:chol_interp} and $\mathrm{R}_{[a,b]}$ defined as in Theorem~\ref{thm:tsapprox},
 \[
   \frac{1}{\sqrt{D}}\|\mathbf{p}_{\mathrm{TS}}(\lambda; \lambda_{\textrm{c}}) - \mathbf{p}_\pi(\lambda)\|_F \leq 
   \sqrt{\frac{g}{D}} w^3 [1 + (\lambda - \lambda_{\textrm{c}})^2](\lambda_{\textrm{c}} + 1) 
  \|\mathbf{V}^\dagger\|_2 \mathrm{R}_{[\lambda_{\textrm{c}} - w, \lambda_{\textrm{c}}+w]}.
 \]
 for any $\lambda > 0.$
\end{theorem}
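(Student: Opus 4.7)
The plan is to exploit the observation hinted at in the preceding paragraph: Algorithm~\ref{alg:chol_interp} is an exact recovery procedure when fed samples of any degree-$r$ polynomial (provided $\mathbf{V}$ has full column rank). Since $\mathbf{p}_{\mathrm{TS}}(\,\cdot\,;\lambda_{\textrm{c}})$ is itself a quadratic in $\lambda$, running the algorithm with target rows $\tilde{\mathbf{T}}_i = \mathrm{vec}(\mathbf{p}_{\mathrm{TS}}(\lambda_i; \lambda_{\textrm{c}}))$ in place of $\mathbf{T}_i = \mathrm{vec}(\mathcal{C}(\mathbf{A}+\lambda_i\mathbf{I}))$ would return a coefficient matrix $\tilde{\Theta}$ whose associated polynomial equals $\mathbf{p}_{\mathrm{TS}}$ exactly. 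Consequently, the discrepancy $\mathbf{p}_\pi - \mathbf{p}_{\mathrm{TS}}$ can be traced entirely to the sampling perturbation $\mathbf{T} - \tilde{\mathbf{T}}$ through the standard least-squares identity $\Theta - \tilde{\Theta} = \mathbf{V}^\dagger(\mathbf{T} - \tilde{\mathbf{T}})$, reducing the proof to a perturbation analysis.

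I would first control the per-sample error using Theorem~\ref{thm:tsapprox}: each row satisfies
\[
   \|\mathbf{T}_i - \tilde{\mathbf{T}}_i\|_2 = \|\mathcal{C}(\mathbf{A} + \lambda_i\mathbf{I}) - \mathbf{p}_{\mathrm{TS}}(\lambda_i;\lambda_{\textrm{c}})\|_F \leq \tfrac{2}{3}|\lambda_i - \lambda_{\textrm{c}}|^3\,\mathrm{R}_{[\lambda_{\textrm{c}} - w, \lambda_{\textrm{c}} + w]} \leq \tfrac{2}{3} w^3\,\mathrm{R}_{[\lambda_{\textrm{c}} - w, \lambda_{\textrm{c}} + w]},
\]
since every sample point lies within $w$ of $\lambda_{\textrm{c}}$ (note that $\mathrm{R}$ is monotone in its interval, so we may enlarge it to $[\lambda_{\textrm{c}}-w,\lambda_{\textrm{c}}+w]$). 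Aggregating the $g$ row bounds in the Frobenius norm yields $\|\mathbf{T} - \tilde{\mathbf{T}}\|_F \leq (2\sqrt{g}/3)w^3\,\mathrm{R}_{[\lambda_{\textrm{c}} - w,\lambda_{\textrm{c}} + w]}$, and submultiplicativity of the pseudoinverse action then gives the coefficient bound $\|\Theta - \tilde{\Theta}\|_F \leq (2\sqrt{g}/3)w^3\|\mathbf{V}^\dagger\|_2\,\mathrm{R}_{[\lambda_{\textrm{c}} - w,\lambda_{\textrm{c}} + w]}$.

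The final step is to propagate this coefficient perturbation to an evaluation error at the target $\lambda$. Writing $\mathbf{b}(\lambda) = (1,\lambda,\lambda^2)^T$ for the monomial basis vector, the vectorization of $\mathbf{p}_\pi(\lambda) - \mathbf{p}_{\mathrm{TS}}(\lambda;\lambda_{\textrm{c}})$ equals $\mathbf{b}(\lambda)^T(\Theta-\tilde{\Theta})$, so the Frobenius norm of the difference is at most $\|\mathbf{b}(\lambda)\|_2\,\|\Theta - \tilde{\Theta}\|_F$. The main technical obstacle — and the only nontrivial remaining step — is to bound $\|\mathbf{b}(\lambda)\|_2$ by the stated factor $(\lambda_{\textrm{c}} + 1)[1 + (\lambda - \lambda_{\textrm{c}})^2]$. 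My plan is to decompose each monomial via $|\lambda|^j \leq (\lambda_{\textrm{c}} + |\lambda - \lambda_{\textrm{c}}|)^j$ for $j\in\{0,1,2\}$ and then apply Young-type inequalities (\emph{e.g.,}\ $2\lambda_{\textrm{c}}|\delta| \leq \lambda_{\textrm{c}}(1 + \delta^2)$ with $\delta = \lambda - \lambda_{\textrm{c}}$) to separate the $\lambda_{\textrm{c}}$-dependent and $\delta$-dependent contributions and collect them into the two claimed factors. Substituting the resulting evaluation bound into the coefficient bound and dividing by $\sqrt{D}$ then gives the inequality stated in the theorem.
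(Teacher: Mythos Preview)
Your overall strategy coincides with the paper's: both view $\mathbf{p}_{\mathrm{TS}}$ as the output of Algorithm~\ref{alg:chol_interp} applied to perturbed targets $\tilde{\mathbf{T}}$, write the evaluation error as $\mathbf{b}(\lambda)^T\mathbf{V}^\dagger(\mathbf{T}-\tilde{\mathbf{T}})$, bound the rows of $\mathbf{T}-\tilde{\mathbf{T}}$ via Theorem~\ref{thm:tsapprox}, and are left with controlling $\|\mathbf{b}(\lambda)\|_2$.

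The gap is exactly in the step you flag as the only nontrivial one. The inequality your Young-type argument targets, $\|(1,\lambda,\lambda^2)\|_2 \le (\lambda_{\textrm{c}}+1)\bigl[1+(\lambda-\lambda_{\textrm{c}})^2\bigr]$, is \emph{false}: at $\lambda=\lambda_{\textrm{c}}$ the left side equals $\sqrt{1+\lambda_{\textrm{c}}^2+\lambda_{\textrm{c}}^4}\sim\lambda_{\textrm{c}}^2$ while the right side is $\lambda_{\textrm{c}}+1$, so no splitting of $|\lambda|^j\le(\lambda_{\textrm{c}}+|\delta|)^j$ can reach it (the $2/3$ slack does not help). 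The paper does not attack $\|\mathbf{b}(\lambda)\|_2$ through the raw monomials; it introduces the explicit upper-triangular change-of-basis matrix $\mathbf{M}$ between the bases $\{1,\lambda,\lambda^2\}$ and $\{1,\lambda-\lambda_{\textrm{c}},(\lambda-\lambda_{\textrm{c}})^2\}$, writes $\mathbf{b}(\lambda)=(\mathbf{M}^{-1})^T\pmb{\tau}_{\mathrm{TS}}$ with $\pmb{\tau}_{\mathrm{TS}}=(1,\lambda-\lambda_{\textrm{c}},(\lambda-\lambda_{\textrm{c}})^2)^T$, and obtains the two factors as $\|\pmb{\tau}_{\mathrm{TS}}\|_2\le 1+(\lambda-\lambda_{\textrm{c}})^2$ and $\|\mathbf{M}^{-1}\|_2\le\lambda_{\textrm{c}}+1$. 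You should be aware, however, that the paper's displayed $\mathbf{M}$ and $\mathbf{M}^{-1}$ contain arithmetic slips (the correct third columns are $(\lambda_{\textrm{c}}^2,-2\lambda_{\textrm{c}},1)^T$ and $(\lambda_{\textrm{c}}^2,2\lambda_{\textrm{c}},1)^T$), and with the corrected $\mathbf{M}^{-1}$ the estimate $\|\mathbf{M}^{-1}\|_2\le\lambda_{\textrm{c}}+1$ also fails by the same counterexample; so while the change-of-basis device is the paper's intended mechanism and is what you are missing, the constant in the stated bound appears to be understated.
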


\begin{proof} Let $\mathbf{T}_{\pi}$ and $\mathbf{V}_{\pi}$ denote the matrices
  $\mathbf{T}$ and $\mathbf{V}$ in Algorithm~\ref{alg:chol_interp}, and
  $\pmb{\Theta}_{\pi}$ denote the matrix of coefficients fit by the
  \textit{pi}Cholesky algorithm, so $\pmb{\Theta}_{\pi} = \mathbf{V}_{\pi}^\dagger \mathbf{V}_{\pi}.$
  Let $\pmb{\tau}_{\pi} = [1\,\,\lambda\,\,\lambda^2]^T$. By construction, the  
  \textit{pi}Cholesky approximation to the Cholesky factor of $\mathbf{A} + \lambda \mathbf{I}$ is given by
  $\vec{\mathbf{p}_{\pi}(\lambda)} = \pmb{\tau}_{\pi}^T \pmb{\Theta}_{\pi}.$

Analogously, let $\mathbf{T}_{\mathrm{TS}}$ denote the $g \times D$ matrix
constructed by sampling $\vec{\mathbf{p}_{\mathbf{TS}}(\cdot;
  \lambda_{\textrm{c}})}$ at the $g$ values $\lambda_1, \ldots, \lambda_g$ and let
$\mathbf{V}_{\mathrm{TS}}$ denote the $g \times 3$ matrix with rows consisting
of the vectors $[1\,\, (\lambda_i - \lambda_{\textrm{c}})\,\, (\lambda_i -
\lambda_{\textrm{c}})^2]$ for $i=1, \ldots, g.$ Observe that since the entries
of $\mathbf{p}_{\mathbf{TS}}(\cdot; \lambda_{\textrm{c}})$ are quadratic
polynomials, because $g > 3$ the relationship $ \mathbf{T}_{\mathrm{TS}} =
\mathbf{V}_{\mathrm{TS}} \pmb{\Theta}_{\mathrm{TS}}$ holds for
$\pmb{\Theta}_{\mathrm{TS}} = \mathbf{V}_{\mathrm{TS}}^\dagger
\mathbf{T}_{\mathrm{TS}}.$ Let $\pmb{\tau}_{\mathrm{TS}} = [1\,\,(\lambda - \lambda_{\textrm{c}})\,\,(\lambda - \lambda_{\textrm{c}})^2],$ then 
$\vec{\mathbf{p}_{\mathrm{TS}}(\lambda; \lambda_c)} = \pmb{\tau}_{\mathrm{TS}}^T \pmb{\Theta}_{\mathrm{TS}}.$

Simple calculations verify that $\mathbf{V}_{\mathrm{TS}} = \mathbf{V}_{\pi} \mathbf{M}$ and $\pmb{\tau}_{\mathrm{TS}} = \mathbf{M}^T \pmb{\tau}_{\pi},$ where
\[
  \mathbf{M} = \begin{bmatrix} 
    1 & -\lambda_{\textrm{c}} & \lambda_{\textrm{c}}^2 \\
    0 & 1 & -\lambda_{\textrm{c}} \\
    0 & 0 & 1
  \end{bmatrix}.
\]
Consequently,
\begin{align*}
  \frac{1}{D}\|\mathbf{p}_{\mathrm{TS}}(\lambda; \lambda_{\textrm{c}}) - \mathbf{p}_\pi(\lambda)\|_F & =
  \frac{1}{D}\big\|\pmb{\tau}_{\mathrm{TS}}^T\pmb{\Theta}_{\mathrm{TS}} - \pmb{\tau}_{\pi}^T \pmb{\Theta}_{\pi}\big\|_2   \\
  & = \frac{1}{D} \big\|\pmb{\tau}_{\pi}^T\mathbf{M} \mathbf{V}_{\mathrm{TS}}^\dagger \mathbf{T}_{\mathrm{TS}} 
  - \pmb{\tau}_{\pi}^T \mathbf{V}_{\pi}^\dagger \mathbf{T}_{\pi}\big\|_2
\end{align*}
We expand $\mathbf{V}_{\mathrm{TS}}^\dagger$ by noting that $\mathbf{V}_{\pi}$ has full column rank (because
the $g$ values of $\lambda_i$ are unique, $g > 3$, and $\mathbf{V}_{\pi}$
consists of the first 3 columns of a $g \times g$ Vandermonde matrix) and $\mathbf{M}$ has full row rank (in fact, it is invertible). It follows that
$\mathbf{V}_{\mathrm{TS}}^\dagger = \big(\mathbf{V}_{\pi} \mathbf{M})^\dagger = \mathbf{M}^{-1} \mathbf{V}_{\pi}^\dagger.$
Accordingly, we find that
\begin{align*}
  \frac{1}{D}\|\mathbf{p}_{\mathrm{TS}}(\lambda; \lambda_{\textrm{c}}) - \mathbf{p}_\pi(\lambda)\|_F & =
  \frac{1}{D}\big\|\pmb{\tau}_{\pi}^T \mathbf{V}_{\pi}^\dagger \mathbf{T}_{\mathrm{TS}} - \pmb{\tau}_{\pi}^T \mathbf{V}_{\pi}^\dagger \mathbf{T}_{\pi}\big\|_2 \\
  & \leq \frac{1}{D} \|\pmb{\tau}_{\pi}\|_2 \|\mathbf{V}_{\pi}^\dagger\|_2 \|\mathbf{T}_{\mathrm{TS}} - \mathbf{T}_{\pi}\|_2 \\
  & \leq \frac{1}{D} \|\mathbf{M}^{-1}\|_2 \|\pmb{\tau}_{\mathrm{TS}}\|_2 \|\mathbf{V}_{\pi}^\dagger\|_2 \|\mathbf{T}_{\mathrm{TS}} - \mathbf{T}_{\pi}\|_2.
\end{align*}
We have 
\[
  \|\pmb{\tau}_{\mathrm{TS}}\|_2 = \sqrt{1 + (\lambda - \lambda_{\textrm{c}})^2 + (\lambda - \lambda_{\textrm{c}})^4} 
  \leq 1 + (\lambda - \lambda_{\textrm{c}})^2. 
\]
To estimate $\|\mathbf{M}^{-1}\|_2$, observe first that 
\[
  \mathbf{M}^{-1} = \begin{bmatrix}
    1 & \lambda_{\textrm{c}} & 0 \\
    0 & 1 & \lambda_{\textrm{c}} \\
    0 & 0 & 1
  \end{bmatrix};
\]
the simple bound $\|\mathbf{M}^{-1}\|_2 \leq \sqrt{1 + \lambda_{\textrm{c}}^2 + 2\lambda_{\textrm{c}}} = \lambda_{\textrm{c}} + 1$ 
follows by considering the action of $\mathbf{M}^{-1}$ on unit length vectors. The matrices 
$\mathbf{T}_{\mathrm{TS}}$ and $\mathbf{T}_\pi$ have dimension $g \times D$, with rows comprising vectorized samples from 
$\mathbf{p}_{\mathrm{TS}}(\lambda; \lambda_{\textrm{c}})$ and $\mathcal{C}(\mathbf{A} - \lambda \mathbf{I})$ respectively. 
Theorem~\ref{thm:tsapprox} bounds the root mean squared error between the rows of the two sample matrices corresponding to the same value of $\lambda,$ giving
\[
  \frac{1}{D} \|\mathbf{T}_{\mathrm{TS}} - \mathbf{T}_{\pi}\|_2 \leq 
  \frac{\sqrt{g}}{D} \max_{i} \| \mathbf{p}_{\mathrm{TS}}(\lambda_i; \lambda_{\textrm{c}}) - \mathcal{C}(\mathbf{A} + \lambda_i \mathbf{I})\|_F \leq
  \frac{2 \sqrt{g}}{3 D} \max_i |\lambda_i - \lambda_{\textrm{c}}|^3 \mathrm{R}_{\lambda_i},
\]
where $\mathrm{R}_{\lambda_i}$ is as defined in Theorem~\ref{thm:tsapprox}.

Putting the pieces together, we conclude that 
\[
  \frac{1}{D}\|\mathbf{p}_{\mathrm{TS}}(\lambda; \lambda_{\textrm{c}}) - \mathbf{p}_\pi(\lambda)\|_F \leq 
  [1 + (\lambda - \lambda_{\textrm{c}})^2]\left( \frac{2(\lambda_{\textrm{c}} + 1)\sqrt{g} }{3D} \right) 
  \|\mathbf{V}_{\pi}^\dagger\|_2 \max_{i=1,\ldots,g} |\lambda_i - \lambda_{\textrm{c}}|^3 \mathrm{R}_{\lambda_i}.
\]

\end{proof}

Our guarantee on the performance of the \textit{pi}Cholesky procedure now
follows from Theorems~\ref{thm:tsapprox} and~\ref{thm:interpstability} and the
triangle inequality.
\begin{theorem}
 \label{thm:mainresult}
 Assume $\mathbf{A}$ is an positive-definite matrix of order $d+1,$ and let $D
 = (d+1)(d+2)/2.$ Given $\lambda_{c} > \gamma  \geq w > 0,$ assume that
 Algorithm~\ref{alg:chol_interp} is used with $g$ samples of $\lambda$ from
 $[\lambda_{\textrm{c}} - w, \lambda_{\textrm{c}} + w]$ to approximate
 $\mathcal{C}(\mathbf{A} + \lambda \mathbf{I}).$ The error of the approximation
 over the interval $[\lambda_{\textrm{c}} - \gamma, \lambda_{\textrm{c}} +
 \gamma]$ is uniformly bounded by
 \[
   \frac{1}{\sqrt{D}} \| \mathcal{C}(\mathbf{A} + \lambda \mathbf{I}) - 
   \mathbf{p}_{\pi}(\lambda) \|_F \leq \big[ \gamma^3 + 
   \sqrt{g} w^3 (1 + \gamma^2) (\lambda_{\textrm{c}} + 1) \|\mathbf{V}^\dagger\|_2 \big]
   \frac{\mathrm{R}_{[\lambda_{\textrm{c}} - \gamma, \lambda_{\textrm{c}} + \gamma]}}{\sqrt{D}}
 \]
 Here, $\mathbf{V}$ is defined in Algorithm~\ref{alg:chol_interp} and 
 $\mathrm{R}_{[a,b]}$ is defined in Theorem~\ref{thm:tsapprox}.
\end{theorem}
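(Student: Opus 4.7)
The plan is to derive the bound in Theorem~\ref{thm:mainresult} as an immediate consequence of Theorems~\ref{thm:tsapprox} and~\ref{thm:interpstability} via the triangle inequality, followed by straightforward monotonicity arguments that uniformly bound the $\lambda$-dependent quantities by their worst-case values over the interval $[\lambda_{\textrm{c}} - \gamma, \lambda_{\textrm{c}} + \gamma]$.

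First I would introduce the Taylor-series approximant $\mathbf{p}_{\mathrm{TS}}(\lambda; \lambda_{\textrm{c}})$ and apply the triangle inequality to split the error as
\[
   \frac{1}{\sqrt{D}} \| \mathcal{C}(\mathbf{A} + \lambda \mathbf{I}) - \mathbf{p}_{\pi}(\lambda) \|_F
   \leq \frac{1}{\sqrt{D}} \|\mathcal{C}(\mathbf{A} + \lambda \mathbf{I}) - \mathbf{p}_{\mathrm{TS}}(\lambda; \lambda_{\textrm{c}})\|_F
   + \frac{1}{\sqrt{D}}\|\mathbf{p}_{\mathrm{TS}}(\lambda; \lambda_{\textrm{c}}) - \mathbf{p}_{\pi}(\lambda)\|_F.
\]
The first term is controlled by Theorem~\ref{thm:tsapprox}, yielding a bound of order $|\lambda - \lambda_{\textrm{c}}|^3 \mathrm{R}_{[\lambda_{\textrm{c}}, \lambda]}/\sqrt{D}$, and the second by Theorem~\ref{thm:interpstability}, yielding a bound of order $\sqrt{g/D}\, w^3 [1 + (\lambda - \lambda_{\textrm{c}})^2](\lambda_{\textrm{c}} + 1) \|\mathbf{V}^\dagger\|_2 \mathrm{R}_{[\lambda_{\textrm{c}} - w, \lambda_{\textrm{c}} + w]}$.

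Next I would invoke the hypothesis $\lambda \in [\lambda_{\textrm{c}} - \gamma, \lambda_{\textrm{c}} + \gamma]$, which gives $|\lambda - \lambda_{\textrm{c}}| \le \gamma$, so that $|\lambda - \lambda_{\textrm{c}}|^3 \le \gamma^3$ and $1 + (\lambda - \lambda_{\textrm{c}})^2 \le 1 + \gamma^2$ (the factor $2/3$ from Theorem~\ref{thm:tsapprox} is absorbed, since it is less than $1$). Because $w \le \gamma$, the intervals $[\lambda_{\textrm{c}}, \lambda]$ and $[\lambda_{\textrm{c}} - w, \lambda_{\textrm{c}} + w]$ are both contained in $[\lambda_{\textrm{c}} - \gamma, \lambda_{\textrm{c}} + \gamma]$, and by the definition of $\mathrm{R}_{[a,b]}$ as a maximum over the interval, both remainder suprema are bounded above by the common value $\mathrm{R}_{[\lambda_{\textrm{c}} - \gamma, \lambda_{\textrm{c}} + \gamma]}$. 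Substituting these loose bounds and factoring out $\mathrm{R}_{[\lambda_{\textrm{c}} - \gamma, \lambda_{\textrm{c}} + \gamma]}/\sqrt{D}$ gives the claimed inequality.

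There is no serious obstacle here; the entire argument is bookkeeping once Theorems~\ref{thm:tsapprox} and~\ref{thm:interpstability} are available. The only mildly delicate point is matching the worst-case intervals for $\mathrm{R}_{[a,b]}$ and checking that the $(\lambda - \lambda_{\textrm{c}})^2$ factor from the interpolation-stability bound can be cleanly replaced by $\gamma^2$ while preserving the monotone structure of the final estimate; both follow from $w \le \gamma$ and the interval-containment property of $\mathrm{R}_{[a,b]}.$
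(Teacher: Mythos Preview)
Your proposal is correct and follows essentially the same route as the paper: triangle inequality to split the error, invoke Theorems~\ref{thm:tsapprox} and~\ref{thm:interpstability} for the two pieces, then replace $|\lambda - \lambda_{\textrm{c}}|$ by $\gamma$ and enlarge both remainder intervals to $[\lambda_{\textrm{c}} - \gamma, \lambda_{\textrm{c}} + \gamma]$ using $w \le \gamma$ and the monotonicity of $\mathrm{R}_{[a,b]}$ in its interval. Your explicit remark that the $2/3$ constant is absorbed and that interval containment is what justifies the common $\mathrm{R}$ factor matches the paper's (implicit) treatment.
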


\begin{proof}
    Applying the triangle inequality and Theorems~\ref{thm:tsapprox} 
    and~\ref{thm:interpstability}, for any $\lambda \in [\lambda_{\textrm{c}} - \gamma, \lambda_{\textrm{c}} + \gamma],$
\begin{align*}
  \frac{1}{\sqrt{D}}\|\mathcal{C}(\mathbf{A} + \lambda \mathbf{I}) - \mathbf{p}_\pi(\lambda)\|_F 
  & \leq \frac{1}{\sqrt{D}} \|\mathcal{C}(\mathbf{A} + \lambda \mathbf{I}) 
  - \mathbf{p}_{\mathrm{TS}}(\lambda; \lambda_{\textrm{c}})\|_F 
  + \frac{1}{\sqrt{D}}\|\mathbf{p}_{TS}(\lambda; \lambda_{\textrm{c}}) - \mathbf{p}_\pi(\lambda)\|_F \\
  & \leq \frac{\gamma^3}{\sqrt{D}} \mathbf{R}_{[\lambda_{\textrm{c}}, \lambda]} 
  + \sqrt{\frac{g}{D}} w^3 (1 + \gamma^2)(\lambda_{\textrm{c}} + 1)
  \|\mathbf{V}^\dagger\|_2 \mathrm{R}_{[\lambda_{\textrm{c}} - w, \lambda_{\textrm{c}}+w]} \\
  & \leq \big[\gamma^3 + \sqrt{g} w^3 ( 1 + \gamma^2) (\lambda_{\textrm{c}} + 1)\|\mathbf{V}^\dagger\|_2 \big]
  \frac{\mathrm{R}_{[\lambda_{\textrm{c}} - w, \lambda_{\textrm{c}}+w]}}{\sqrt{D}}.
\end{align*}
\end{proof}

\section{Vectorizing a Cholesky Factor}
\label{sec:ichallenges}
%\subsection{Vectorizing a Cholesky Factor}
%\label{ss:vec_chol}
\noindent We now turn our attention towards solving the efficiency challenges of Algorithm~\ref{alg:chol_interp}. Recall that in Algorithm~\ref{alg:chol_interp}, the operations $\mathbf{G}=\mathbf{V}^{\mathrm{T}}{\mathbf{T}}$ and $\Theta=\mathbf{H}_{\lambda}^{-1}\mathbf{G}_{\lambda}$ can be done efficiently by employing BLAS-$3$ level matrix-matrix computations. However, this would require having the Cholesky factor $\mathbf{L}$ for each $\lambda$ value as one of the rows of the target matrix $\mathbf{T}$. This matrix-vector conversion can be a computational bottleneck if done naively. For instance, concatenating the lower-triangular part of $\mathbf{L}$ in a \textbf{row-wise} manner would result in significant number of non-contiguous memory copies. On the other hand, vectorizing $\mathbf{L}$ as a \textbf{full-matrix} would increase the number of interpolations (ln. $5$-$6$ in Algorithm~\ref{alg:chol_interp}) by a factor of $2$.
%\vspace{0.1cm}\noindent So far we have assumed that the Cholesky factor computed for each $\lambda$ in the sparsely sampled set of $\lambda$ values is stored into a row vector represented by one row of the target matrix $\mathbf{T}$ in Algorithm \ref{alg:chol_interp}. Thus, efficient BLAS-$3$ level functions can be conveniently called when computing $\mathbf{G}=\mathbf{V}^{\mathrm{T}}{\mathbf{T}}$ and $\Theta=\mathbf{H}_{\lambda}^{-1}\mathbf{G}_{\lambda}$. However, in Algorithm \ref{alg:chol_interp} we did not account for the computational cost of transforming a Cholesky factor into a row vector, which can turn out to be non-trivial (see below and Table~\ref{table:timingbreakup} for more information). We now present an efficient way to transform a Cholesky factor into a vector so that we can use BLAS-$3$ level functions on it.

\begin{table*}
\small
\centering
\begin{tabular}{|c||c|c|c|c||c|c|c|c||c|c|c|c|}
\hline
\multirow{2}{*}{\textbf{Dimensions}} & \multicolumn{4}{c||}{\textbf{Row-wise}} & \multicolumn{4}{c||}{\textbf{Full-matrix}} & \multicolumn{4}{c|}{\textbf{Recursive}} \\
\cline{2-13} & Vec & Fit & Interp & \textbf{Total} & Vec & Fit & Interp & \textbf{Total} & Vec & Fit & Interp & \textbf{Total} \\
\hline
$1024$ & 1.48 & 0.33 & 1.25 & \textbf{3.06} & 0.15 & 0.72 & 2.47 & \textbf{3.34} & 0.79 & 0.33 & 1.27 & \textbf{2.39} \\
\hline
$2048$ & 5.94 & 1.49 & 4.86 & \textbf{12.29} & 0.97 & 3.03 & 15.79 & \textbf{19.79} & 2.21 & 1.49 & 4.95 & \textbf{8.65} \\
\hline
$4096$ & 29.85 & 5.79 & 27.46 & \textbf{63.1} & 3.76 & 11.46 & 59.9 & \textbf{75.13} & 6.66 & 5.85 & 27.64 & \textbf{40.15} \\
\hline
$8192$ & 129.6 & 23.31 & 140.8 & \textbf{293.7} & 18.29 & 49.49 & 283.1 & \textbf{350.9} & 22.31 & 22.53 & 107.4 & \textbf{152.3} \\
\hline
$16384$ & 500.9 & 98.83 & 515.76 & \textbf{1116} & 69.4 & 209.8 & 1041.8 & \textbf{1321.1} & 91.74 & 99.36 & 512.8 & \textbf{703.9} \\
\hline
\end{tabular}
\caption{Comparison of the timing results of $pi$Cholesky before and after optimization, using MNIST data. The measured times (in seconds) include the tranformation between an upper-triangular Cholesky factor and its vectorized form, as well as fitting and interpolating the polynomial functions, abbreviated as ``vec'', ``fit'', and ``interp'' respectively.}
\label{table:timingbreakup}
\end{table*}

We now present an efficient way to vectorize a Cholesky factor $\mathbf{L}$ such that: (i) we achieve aligned memory copy and, (ii) we have non-redundant computation in the interpolation step of Algorithm~\ref{alg:chol_interp}.

Let $h=d+1$ denote the dimension of $\mathbf{L}$. Also, without the loss of generality, let us consider $h$ to be a power of two. We use a divide-and-conquer strategy to partition the lower-triangular part into a square matrix and two smaller lower-triangular matrices, \textit{i.e.},
\begin{equation}
\begin{small}
\label{eq:recursive}
\begin{aligned}
& \mathbf{L}^{12} = \mathbf{L}({h \over 2}+1:h, 1:{h \over 2}) \\
& \mathbf{L}^{11} = \mathbf{L}(1:{h \over 2}, 1:{h \over 2})\\
& \mathbf{L}^{22} = \mathbf{L}({h \over 2}+1:h, {h \over 2}+1:h)
\end{aligned}
\end{small}
\end{equation}
\noindent This strategy is depicted in Figure~\ref{fig:recursive}(a). The vectorization of $\mathbf{L}$ is the concatenation of the vectorizations of $\mathbf{L}^{12}$, $\mathbf{L}^{11}$, and $\mathbf{L}^{22}$. To vectorize the square matrix $\mathbf{L}^{12}$, we can simply use the ordering in the full-matrix strategy because $\mathbf{L}^{12}$ has no special structure and is already memory-aligned. For the smaller lower-triangular matrices $\mathbf{L}^{11}$ and $\mathbf{L}^{22}$, we recursively partition its lower-triangular part using the partitioning scheme in Equation~\ref{eq:recursive} until a threshold dimension $h_0$ is reached. At the deepest level of the recursion, we use the row-wise strategy to vectorize the $h_0 \times h_0$ matrix, which for a sufficiently small $h_0$ is not expensive. The resulting \textbf{recursive} partitioning of the Cholesky factor $\mathbf{L}$ is depicted in Figure~\ref{fig:recursive}(b).

\begin{figure}
\centering
\subfloat[] {\includegraphics[width=0.2\textwidth]{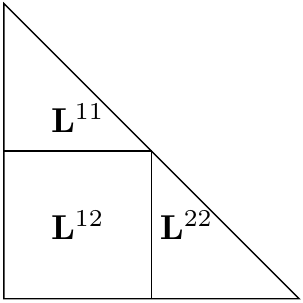}}
\hspace{0.5in}
\subfloat[] {\includegraphics[width=0.2\textwidth]{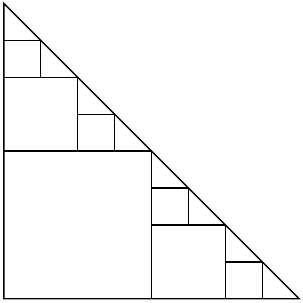}}
  \caption{{\small Proposed \textbf{recursive} strategy to vectorize $\mathbf{L}$. (a) Partitioning of lower-triangular part. (b) Final partitioning obtained by recursively applying the scheme in (a).}}
  \label{fig:recursive}
  \vspace{-0.3cm}
\end{figure}

Note that our recursive vectorization strategy can be applied to store any upper/lower-triangular matrix and is more generally applicable beyond the scope of this paper. Table~\ref{table:timingbreakup} gives an empirical sense of the efficiency of our recursive strategy compared to row-wise and full-matrix ones.

\section{Experiments}
\label{sec:exps}
\vspace{0.1cm}\noindent We now present the timing performance and approximation error of our \textit{pi}Cholesky framework using multiple data-sets, and compare it with standard and state-of-the-art methods. Our results show that our proposed scheme is able to accelerate large-scale linear regression substantially and achieve high accuracy in selecting the optimal regularization parameter.

\subsection{Data Sets}

\vspace{0.1cm}\noindent We use four image data sets in our experiments. Some of the important details of these data-sets are given in Table \ref{table:datasets}. For MNIST and COIL-$100$ data-sets, we projected the samples to $1023$, $2047$, $4095$, $8191$, and $16383$ dimensions using the randomized polynomial kernel~\cite{kar2012random}. For the Caltech-$101$ and $256$ data-sets, we projected the samples to $16383$ dimensions using the spatial pyramid framework~\cite{lazebnik2006beyond}. All data-sets were converted to $2$ class problems with equal numbers of positive and negative samples. In the following, we denote $h=d+1$ as the projected plus intercept dimensions.

%\begin{enumerate}
%\item \textbf{MNIST}~\cite{lecunmnist}: A widely-used data set of hand-written digits containing $60,000$ examples in $784$ dimensions.\vspace{-0.175cm}
%\item \textbf{COIL-100}~\cite{coil100}: Images of 100 objects, each having 72 images with viewpoints equally spaced in the entire $360\textordmasculine$ range. The dimensionality of the data is $784$.\vspace{-0.175cm}
%\item \textbf{Caltech-101}~\cite{fei2007learning}: Images of objects belonging to $101$ categories with an average of $90$ images per category. The size of each image is roughly $300$ x $200$ pixels.\vspace{-0.15cm}
%\item \textbf{Caltech-256}: Images of objects belonging to $256$ categories with around $120$ images per category. The size of each image is roughly $300$ x $200$ pixels~\cite{griffin2007caltech}.\vspace{-0.15cm}
%\end{enumerate}

\begin{table}[!t]
\small
\centering
\begin{tabular}{|c|c|c|}
\hline
& Dimensionality & \# Samples \\
\hline
\textbf{MNIST} & $28 \times 28$ & 60,000 \\
\hline
\textbf{COIL-100} & $28 \times 28$ & 7,200 \\
\hline
\textbf{Caltech-101} & $320 \times 200$ & 8,677 \\
\hline
\textbf{Caltech-256} & $320 \times 200$ & 29,780 \\
\hline
\end{tabular}
\caption{Summary of the data-sets used -- MNIST~\cite{lecunmnist}, COIL-$100$~\cite{coil100}, Caltech-$101$~\cite{fei2007learning}, and Caltech-$256$~\cite{griffin2007caltech}.}
\label{table:datasets}
\end{table}

\subsection{Comparative Algorithms}
\label{exp:algs}

\noindent We compare the following algorithms for solving least squares with cross validation:\vspace{-0.15cm}

\begin{enumerate}
\item \textbf{Exact Cholesky} (\texttt{Chol}) -- Apply Cholesky factorization to the Hessian matrix $\mathbf{H}$ for each candidate $\lambda$ value as described in $\S$\ref{subsec:exactcholesky}.

\item \textbf{\textit{pi}Cholesky} (\texttt{PIChol}) -- The proposed approach.

\item \textbf{Multi-level Cholesky} ({\texttt{MChol}}) -- We consider a binary-search-like multi-level approach that progressively narrows down the search range of the optimal $\lambda$. More rigorously, starting with an initial range $[10^{c-s}, 10^{c+s}]$ where $c,s \in \mathbb{R}$ and $s > 0$, we perform the following three steps iteratively:
\begin{enumerate}
\item Evaluate the hold-out errors $h(\lambda)$ by computing the exact Cholesky factorization at $\lambda=10^{c-s}, 10^c, 10^{c+s}$.
\item Choose the $\lambda$ value with the smallest hold-out error in Step (a), \textit{i.e.}, $\lambda_{\text{opt}} = \arg\min_{\lambda} h(\lambda)$.
\item Update $c,s$: $c \leftarrow \log \lambda_{\text{opt}}$, $s \leftarrow s/2$, and define the updated range $[10^{c-s}, 10^{c+s}]$.
\end{enumerate}
The procedure ends when $s \leq s_0$ for a given value $s_0 > 0$. We used this approach to find the initial search ranges for each data-set. These ranges are then used by all the comparative algorithms (including \texttt{MChol}) to find the optimal $\lambda$ value.

\item \textbf{Exact SVD} (\texttt{SVD}) -- SVD is a standard method for solving ridge regression~\cite{svdregression}. Given an $n \times (d+1)$ design matrix $\mathbf{X}$ and its SVD $\mathbf{X} = \mathbf{U} \mathbf{\Sigma} \mathbf{V}^{\textrm{T}}$, the solution of the coefficient vector $\theta$ can be derived from Equation \ref{eq:reg_normal_equation_Hg}:
\begin{equation}
\theta = \mathbf{V} \text{diag}\left({\sigma_1 \over \sigma_1^2 + \lambda}, \cdots, {\sigma_{d+1} \over \sigma_{d+1}^2 + \lambda}\right) \mathbf{U}^{\mathrm{T}} \mathbf{g}
\label{eq:reg_normal_equation_svd}
\end{equation}
where $\sigma_1, \cdots, \sigma_{d+1}$ are the singular values of $\mathbf{X}$ in non-increasing order.

\item \textbf{Truncated SVD} (\texttt{t-SVD}) -- Instead of using the full SVD, we compute the $k$ singular vectors $\hat{\mathbf{U}}$ and $\hat{\mathbf{V}}$ that correspond to the $k$ largest singular values of $\mathbf{X}$, so that $\mathbf{X}$ is approximated by $\mathbf{X} \approx \hat{\mathbf{U}} \hat{\mathbf{\Sigma}} \hat{\mathbf{V}}^{\textrm{T}}$, where $\hat{\mathbf{U}} \in \mathbb{R}^{n \times k}$, $\hat{\mathbf{\Sigma}} \in \mathbb{R}^{k \times k}$, $\hat{\mathbf{V}} \in \mathbb{R}^{(d+1) \times k}$. Then we obtain the coefficient vector $\theta$ accordingly. We used an iterative solver to compute the truncated SVD which is faster than the algorithm for computing the full SVD.

\item \textbf{Randomized Approximate SVD} (\texttt{r-SVD}) -- Random projections have been shown to approximately solve the truncated SVD problem efficiently. In this work, we use the randomized SVD algorithm described in~\cite{halko2011finding}.
\end{enumerate}

%\vspace{0.1cm}\noindent Note that the complexity of computing the exact SVD of $\mathbf{X}$ is $\mathcal{O} (2nd^2+11d^3)$~\cite{golub2012matrix}. For $n \leq d+1$, we compute the full SVD of the design matrix $\mathbf{X}$ and the coefficient vector $\theta$ as in Equation \ref{eq:reg_normal_equation_svd}. For $n > d$, it is prohibitively more expensive to compute the full SVD for each fold of the training data than forming the Hessian first and computing its eigenvalue decomposition, though the latter is less numerically stable. Therefore, for the three SVD variations above, we use the Hessian-based framework described in $\S$\ref{subsec:preliminaries} when $n > d$ but replace Cholesky factorization with eigenvalue decomposition.

\vspace{0.1cm}\noindent Recall that while QR decomposition is another feasible algorithm to solve the least squares problem~\cite{golub2012matrix}, it is generally applied on the design matrix $\textbf{X}$, and cannot be easily applied to linear regression with regularization. QR decomposition can also solve the linear system represented by the Hessian matrix directly as well, but it is more expensive than Cholesky factorization which exploits the symmetric and positive-definite property of the Hessian. Therefore, in our analysis we do not include the comparison with QR decomposition.

\iffalse
Recall that QR decomposition is another feasible algorithm to solve the least squares problem~\cite{golub2012matrix}. However, though it is generally applied on the design matrix $\textbf{X}$, it cannot be easily applied to the problem of linear regression with regularization. Indeed, QR decomposition can also solve the linear system represented by the Hessian matrix directly as well, but it is more expensive than Cholesky factorization which exploits the symmetric and positive-definite property of the Hessian. Therefore, in our analysis we do not include the comparison with QR decomposition.
\fi

\subsection{Experiment Settings}

\vspace{0.1cm}\noindent For \texttt{Chol}, \texttt{SVD}, \texttt{t-SVD}, and \texttt{r-SVD}, we search for the optimal $\lambda$ value from a candidate set of 31 exponentially spaced $\lambda$ values. For \texttt{PIChol}, we sparsely sample 4 $\lambda$ values from those 31 values and interpolate Cholesky factors using second-order polynomial functions, \textit{i.e.}, $g=4$ and $r=2$ in Algorithm \ref{alg:chol_interp}. Second-order polynomial functions are appropriate since we empirically find that the Cholesky entries are typically concave (Figure~\ref{fig:exact_inter_factors}) and the hold-out error curves are typically convex (Figures~\ref{fig:holdout_zoomin_mnist},\ref{fig:holdout_zoomin_coil_caltech}) for all the data sets. For \texttt{MChol} described in Section $\S$\ref{exp:algs}, we set the parameters $s=1.5$ and $s_0=0.0025$. For all the six algorithms, the range from which candidate $\lambda$ values are drawn is set to $[10^{-3}, 1]$, $[10^{-3}, 1]$, $[10^{-8}, 10^{-5}]$, and $[10^{-3}, 1]$ for the four data sets respectively. We ran all our experiments on an $8$-core machine using multi-threaded linear algebra routines.

\subsection{Timing Results}

\begin{figure}
\centering
\includegraphics[width=0.5\textwidth]{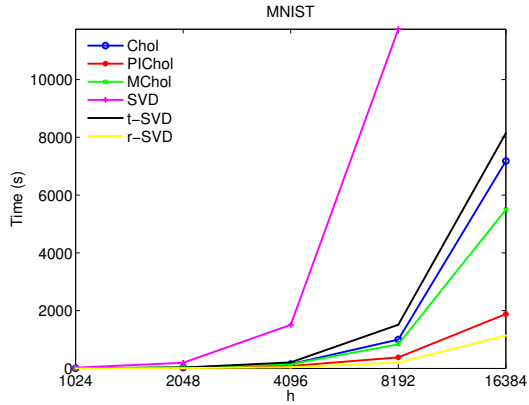}
\caption{Times (in seconds) of the $6$ considered algorithm, summed over all cross validation folds, as a function of $h$ on MNIST data.}
\label{fig:timing}
\end{figure}

\begin{table}
\small
\centering
\begin{tabular}{|c||c||c||c||c|c|}
\hline
\multirow{2}{*}{} & \multirow{2}{*}{\textbf{MNIST}} & \textbf{COIL} & \textbf{Caltech} & \textbf{Caltech} \\
& & \textbf{-100} & \textbf{-101} & \textbf{-256} \\
\hline
\hline
\texttt{Chol} & 718 & 691 & 706 & 686 \\
\hline
\texttt{PIChol} & 188 & 167 & 169 & 174 \\
\hline
\texttt{MChol} & 550 & 235 & 545 & 527 \\
\hline
\texttt{SVD} & 9415 & 3489 & 9060 & 9823 \\
\hline
\texttt{t-SVD} & 815 & 858 & 1078 & 1318 \\
\hline
\texttt{r-SVD} & 1140 & 67 & 78 & 111 \\
\hline
\end{tabular}
\caption{Time taken by the six algorithms when $h=16384$ per cross validation fold. All times are reported in seconds.}
\label{table:timing}
\end{table}

\vspace{0.1cm}\noindent Figure~\ref{fig:timing} and Table \ref{table:timing} show the timing results of the Cholesky and SVD-based algorithms. It can be observed that the \texttt{PIChol} has significant speedups over \texttt{Chol} and \texttt{MChol}. Moreover, the \texttt{SVD} and \texttt{t-SVD} are always the slowest. Finally, \texttt{r-SVD} is always the fastest algorithm; however, as we will see in the holdout-error results, \texttt{r-SVD} does not give any useful conclusions for the optimal $\lambda$ value. Figure~\ref{fig:timing} shows the timings for the MNIST data. We obtained similar timing trends for all the four data-sets we used.

\subsection{Hold-out Errors}

\begin{figure}
\centering
\includegraphics[width=0.2375\textwidth]{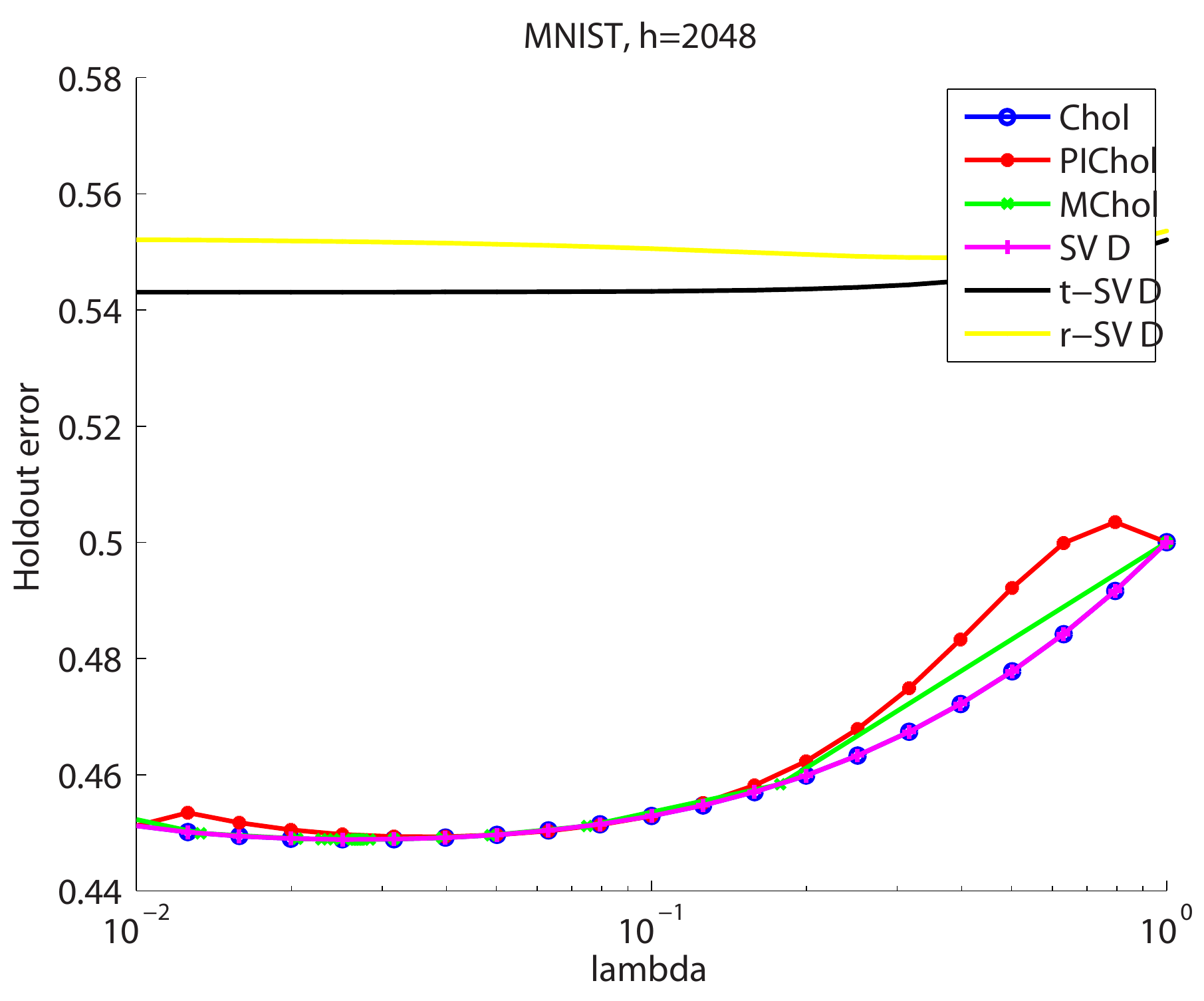}
\includegraphics[width=0.2375\textwidth]{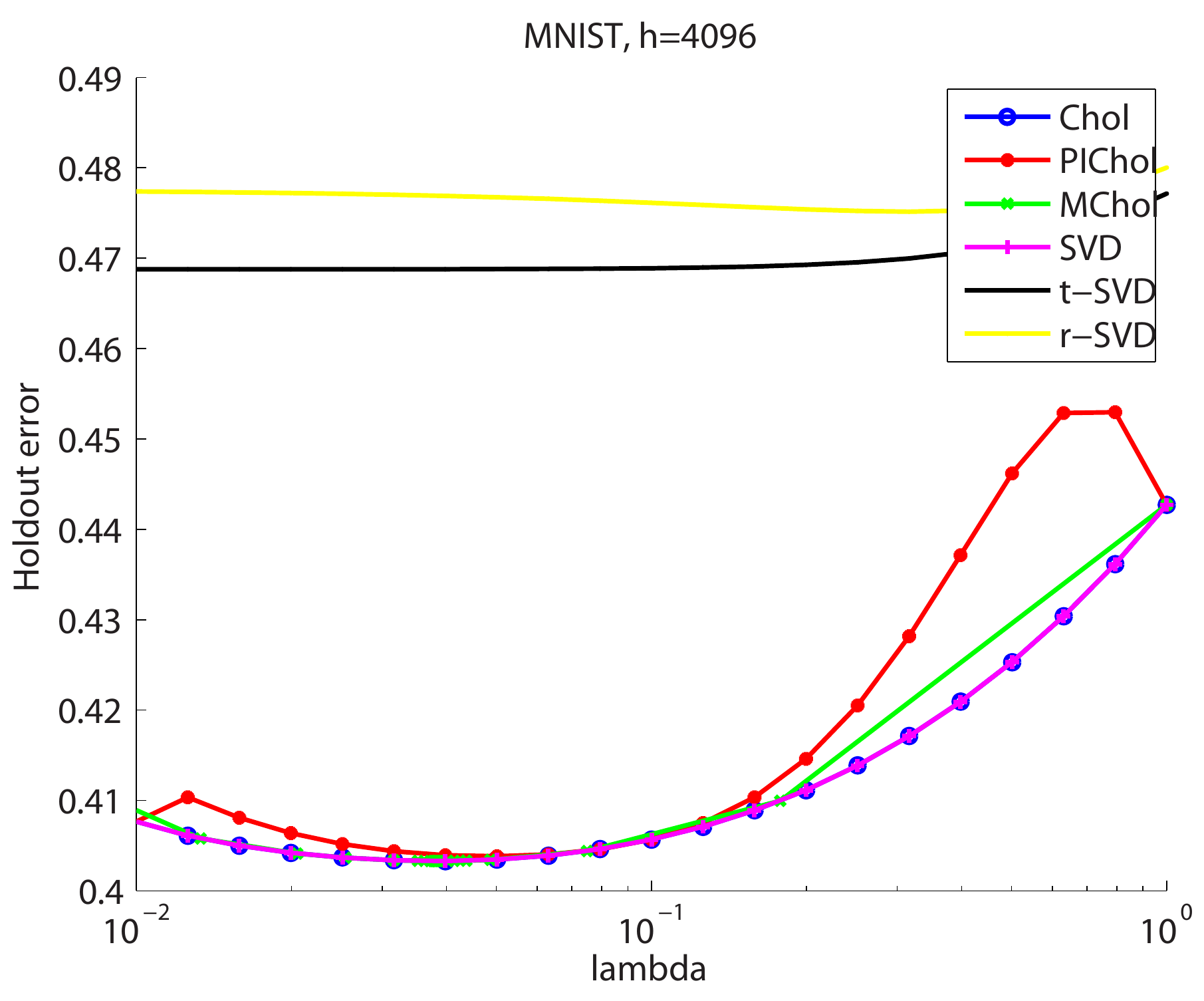}
\includegraphics[width=0.2375\textwidth]{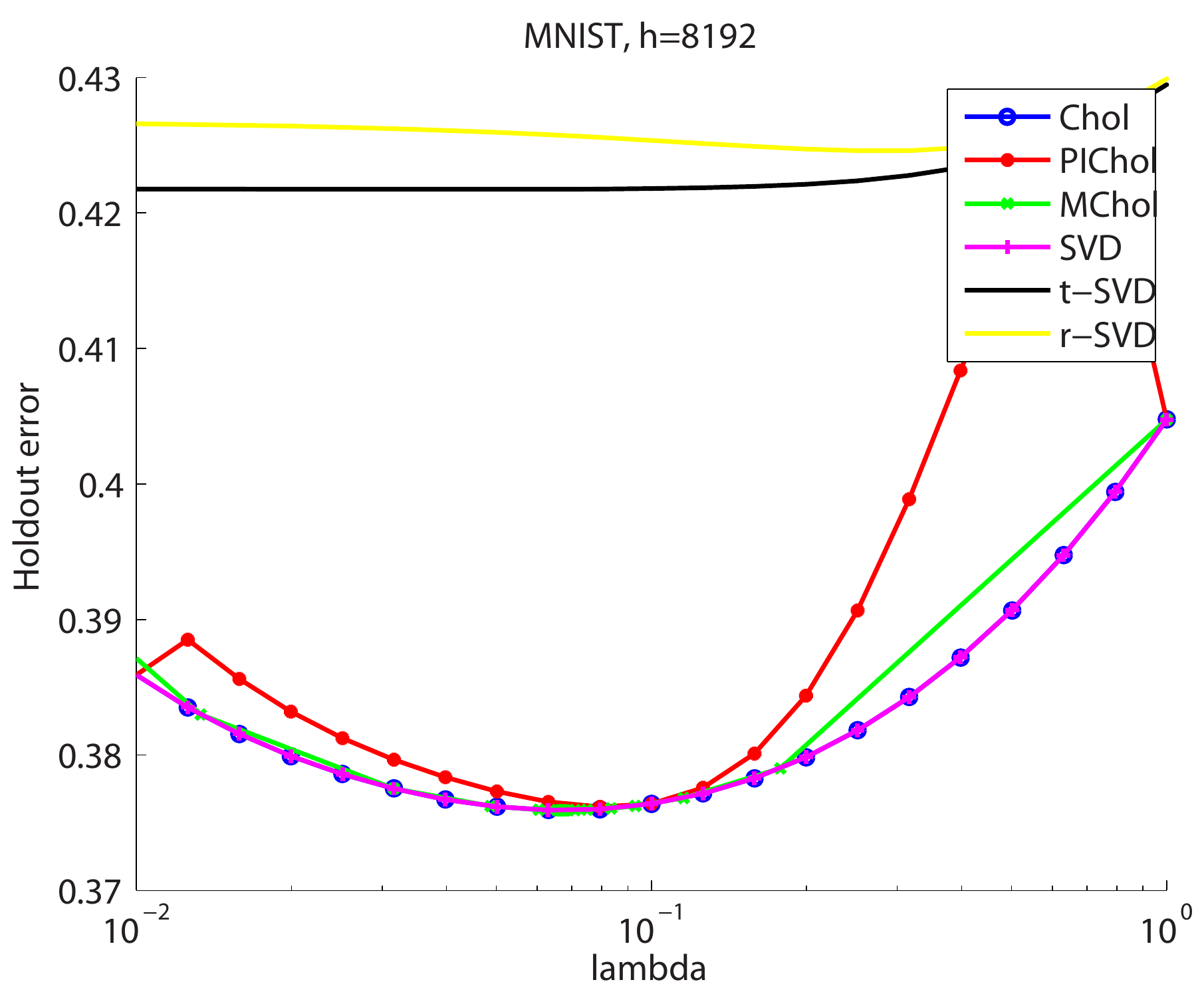}
\includegraphics[width=0.2375\textwidth]{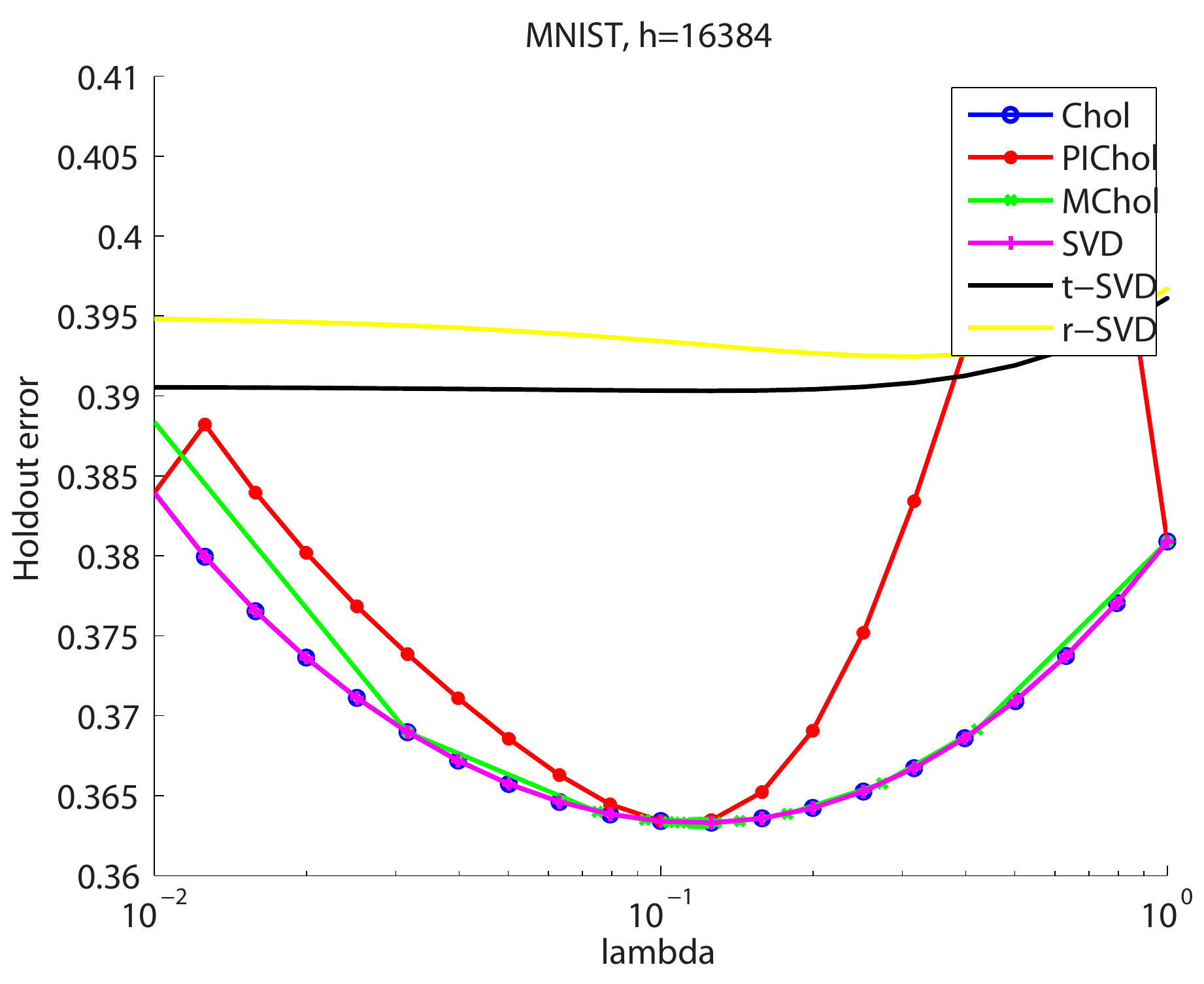}
\caption{Hold-out errors for the six algorithms as a function of $\lambda$ on the MNIST data.}
\label{fig:holdout_zoomin_mnist}
\end{figure}

\begin{figure*}
\centering
\includegraphics[width=0.245\textwidth]{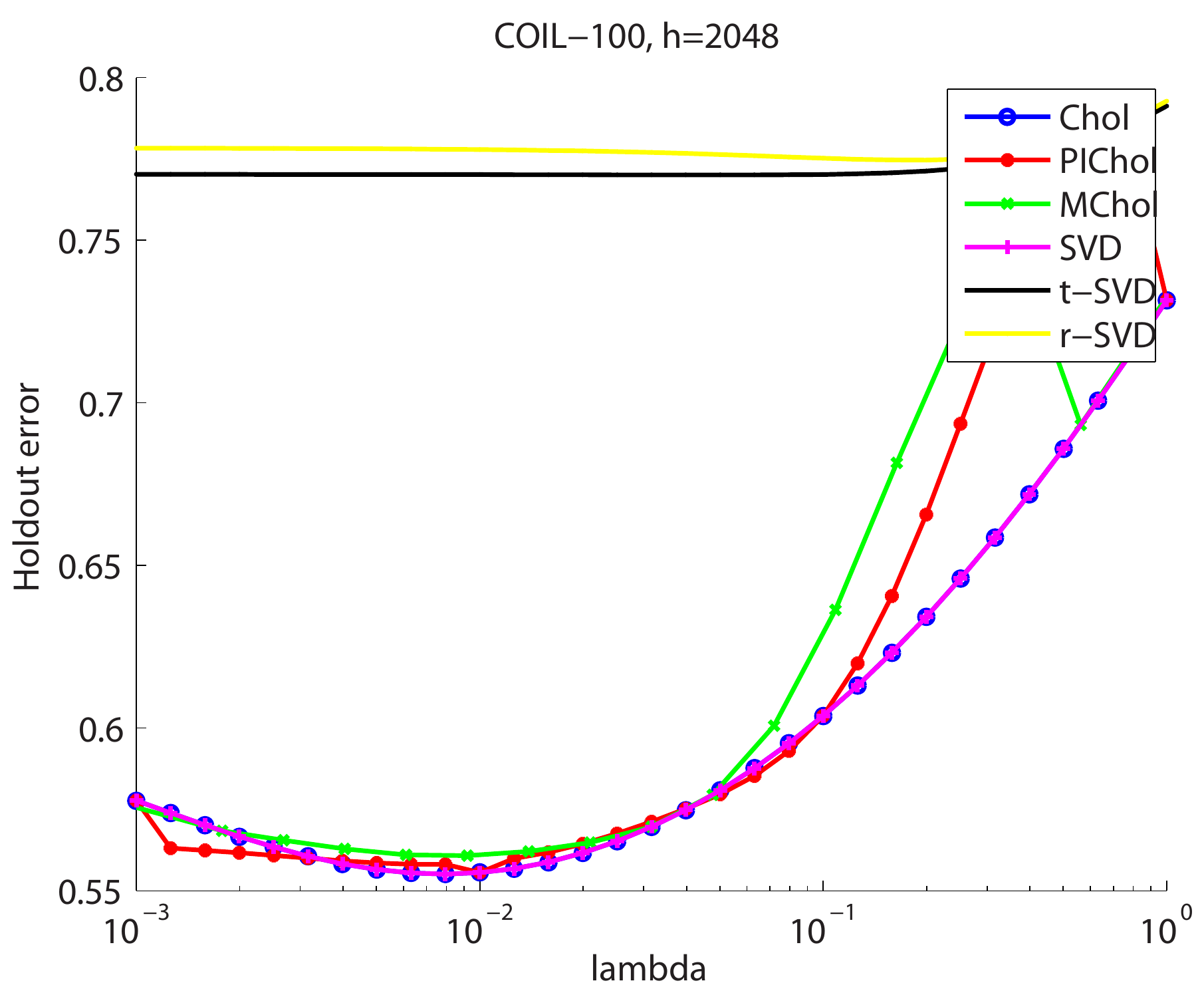}
\includegraphics[width=0.245\textwidth]{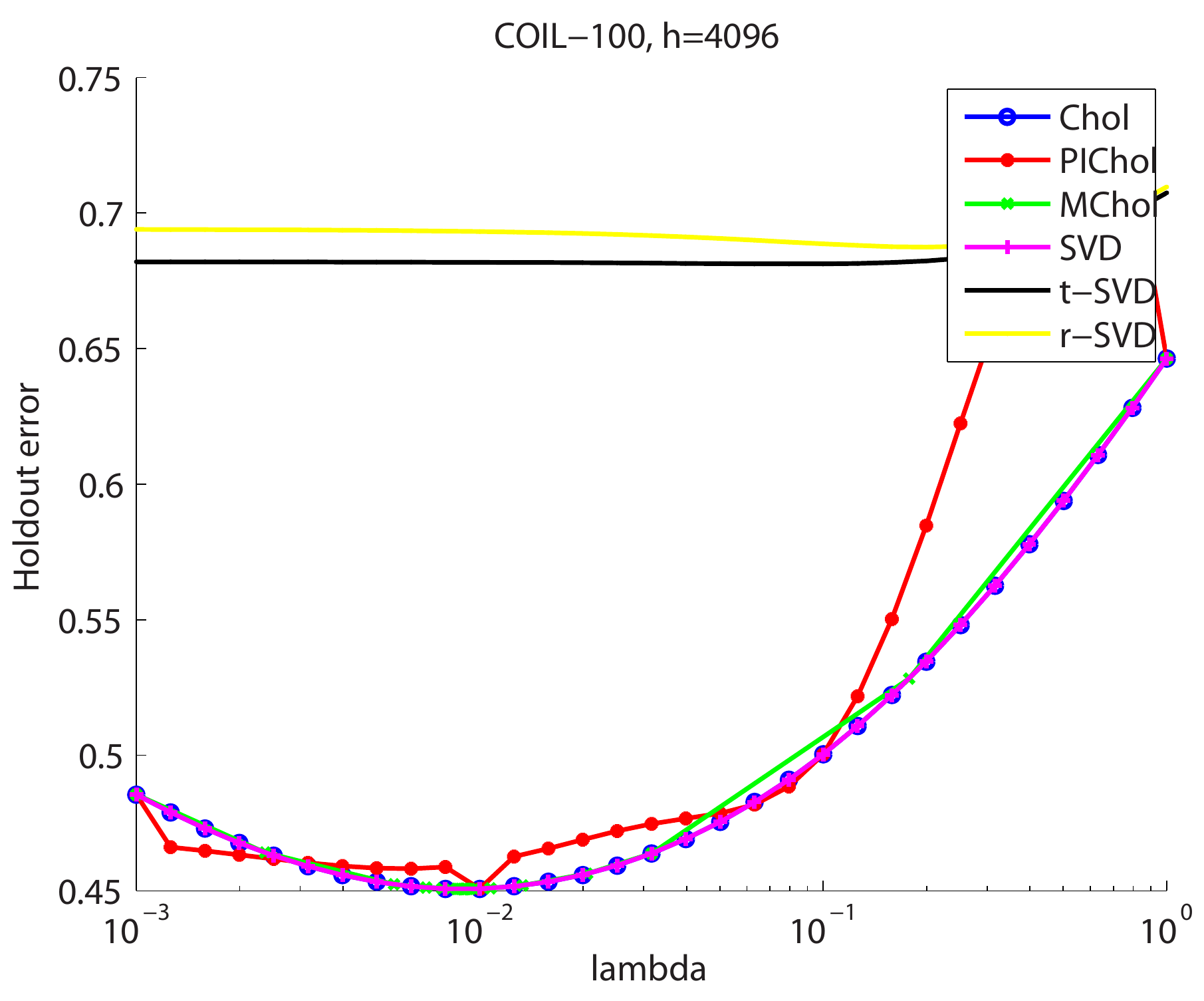}
\includegraphics[width=0.245\textwidth]{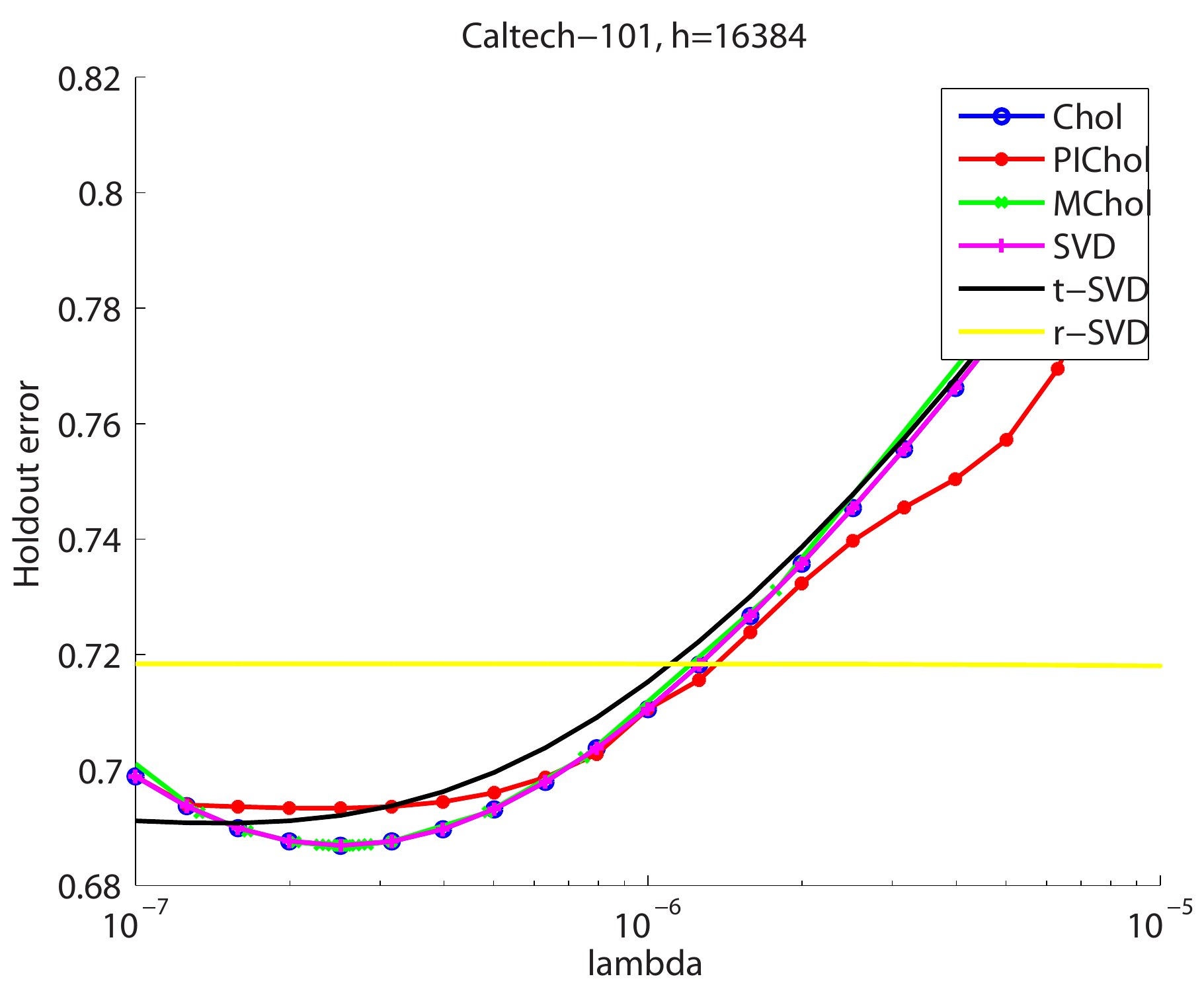}
\includegraphics[width=0.245\textwidth]{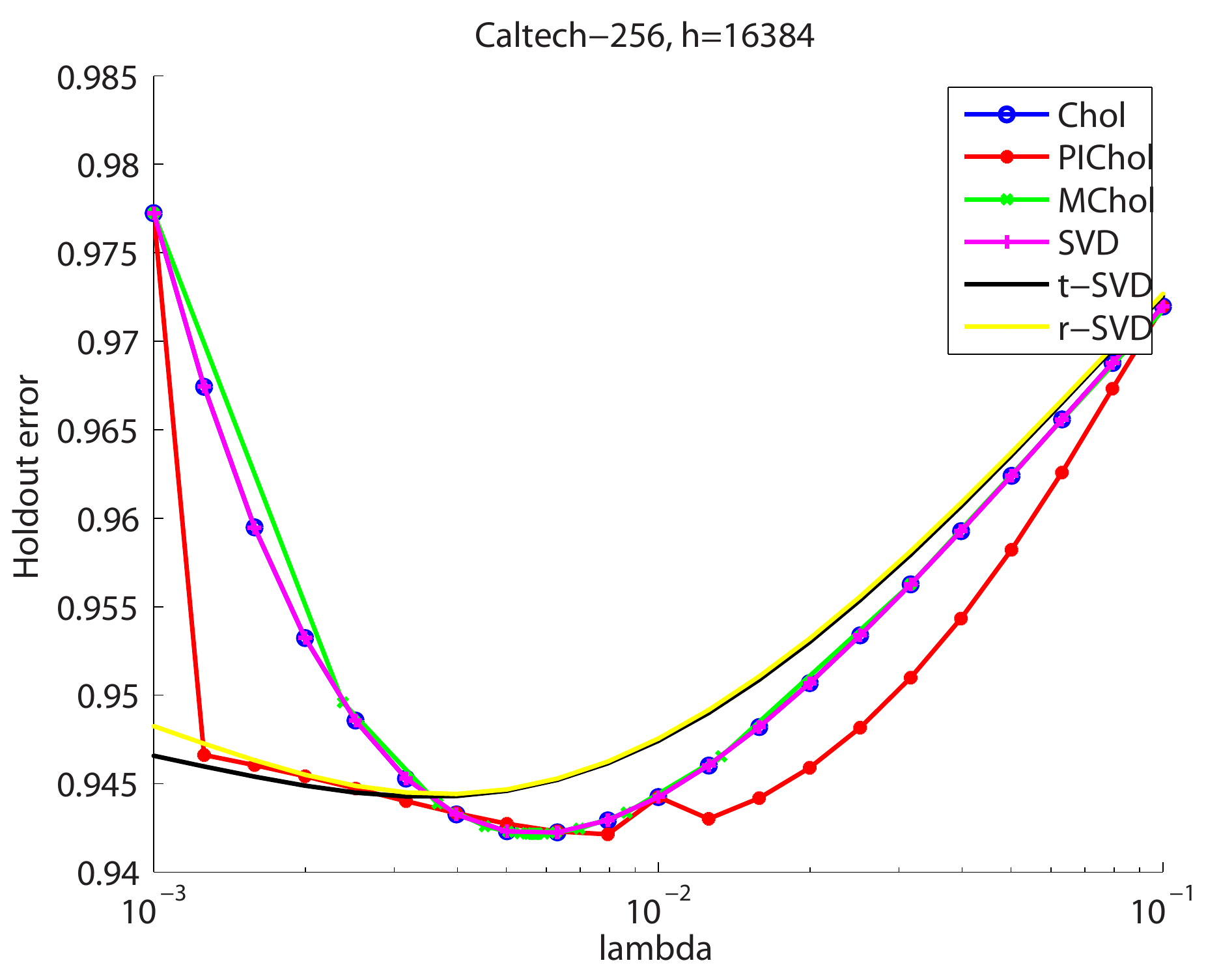}
\caption{\textbf{a}, \textbf{b}-- Hold-out errors for the six algorithms as a function of $\lambda$ on the COIL-100 dataset for projection dimensions of $2048$ and $4096$ respectively. \textbf{c}, \textbf{d}-- Hold-out errors for the six algorithms as a function of $\lambda$ on Caltech 101 and Caltech 256.}
\label{fig:holdout_zoomin_coil_caltech}
\end{figure*}

\begin{table*}
\small
\centering
\begin{tabular}{|c||c|c||c|c||c|c||c|c|}
\hline
\multirow{3}{*}{} & \multicolumn{2}{c||}{\textbf{MNIST}} & \multicolumn{2}{c||}{\textbf{COIL-100}} & \multicolumn{2}{c||}{\textbf{Caltech-101}} & \multicolumn{2}{c|}{\textbf{Caltech-256}} \\
\cline{2-9}
& Minimum ho- & Selected & Minimum ho- & Selected & Minimum ho- & Selected & Minimum ho- & Selected \\
& ldout error & $\lambda$ & ldout error & $\lambda$ & ldout error & $\lambda$ & ldout error & $\lambda$ \\
\hline
\texttt{Chol} & 0.3633 & 0.1259 & 0.4507 & 0.01 & 0.6869 & 2.51e-7 & 0.9422 & 0.0063 \\
\hline
\texttt{PIChol} & 0.3634 & 0.1 & 0.4507 & 0.01 & 0.6934 & 2.51e-7 & 0.9421 & 0.0079 \\
\hline
\texttt{MChol} & 0.3633 & 0.1186 & 0.4506 & 0.009 & 0.6869 & 2.51e-7 & 0.9422 & 0.0057 \\
\hline
\texttt{SVD} & 0.3633 & 0.1259 & 0.4507 & 0.01 & 0.6869 & 2.51e-7 & 0.9422 & 0.0063 \\
\hline
\texttt{t-SVD} & 0.3903 & 0.1259 & 0.6812 & 0.0794 & 0.6908 & 1.58e-7 & 0.9443 & 0.0032 \\
\hline
\texttt{r-SVD} & 0.3925 & 0.3162 & 0.6874 & 0.1995 & 0.7180 & 1.0e-5 & 0.9444 & 0.004 \\
\hline
\end{tabular}
\caption{The minimum hold-out error and the selected $\lambda$ value for the six considered algorithms on four datasets.}
\label{table:holdout}
\end{table*}

\noindent Figure~\ref{fig:holdout_zoomin_mnist} shows the hold-out errors obtained for MNIST data when projected to $2048$, $4096$, $8192$, and $16384$ dimensions respectively. Similarly, Figure~\ref{fig:holdout_zoomin_coil_caltech}-\textbf{a} and \textbf{b} show the hold-out errors for COIL-100 data for $2048$ and $4096$ dimensions, while Figure~\ref{fig:holdout_zoomin_coil_caltech}-\textbf{c} and \textbf{d} show hold-out errors for Caltech 101 and Caltech 256 datasets for $16384$ dimensions each. Table \ref{table:holdout} shows the minimum hold-out error and the selected $\lambda$ value for each algorithm on all the four data sets. Figure~\ref{fig:accuracy_vs_time_coil_caltech} shows the error in the selected $\lambda$'s for \texttt{Chol}, \texttt{PIChol}, and \texttt{MChol} as a function of running time on COIL-100 and Caltech-101 datasets.

\begin{figure}
\centering
\includegraphics[width=0.4\textwidth]{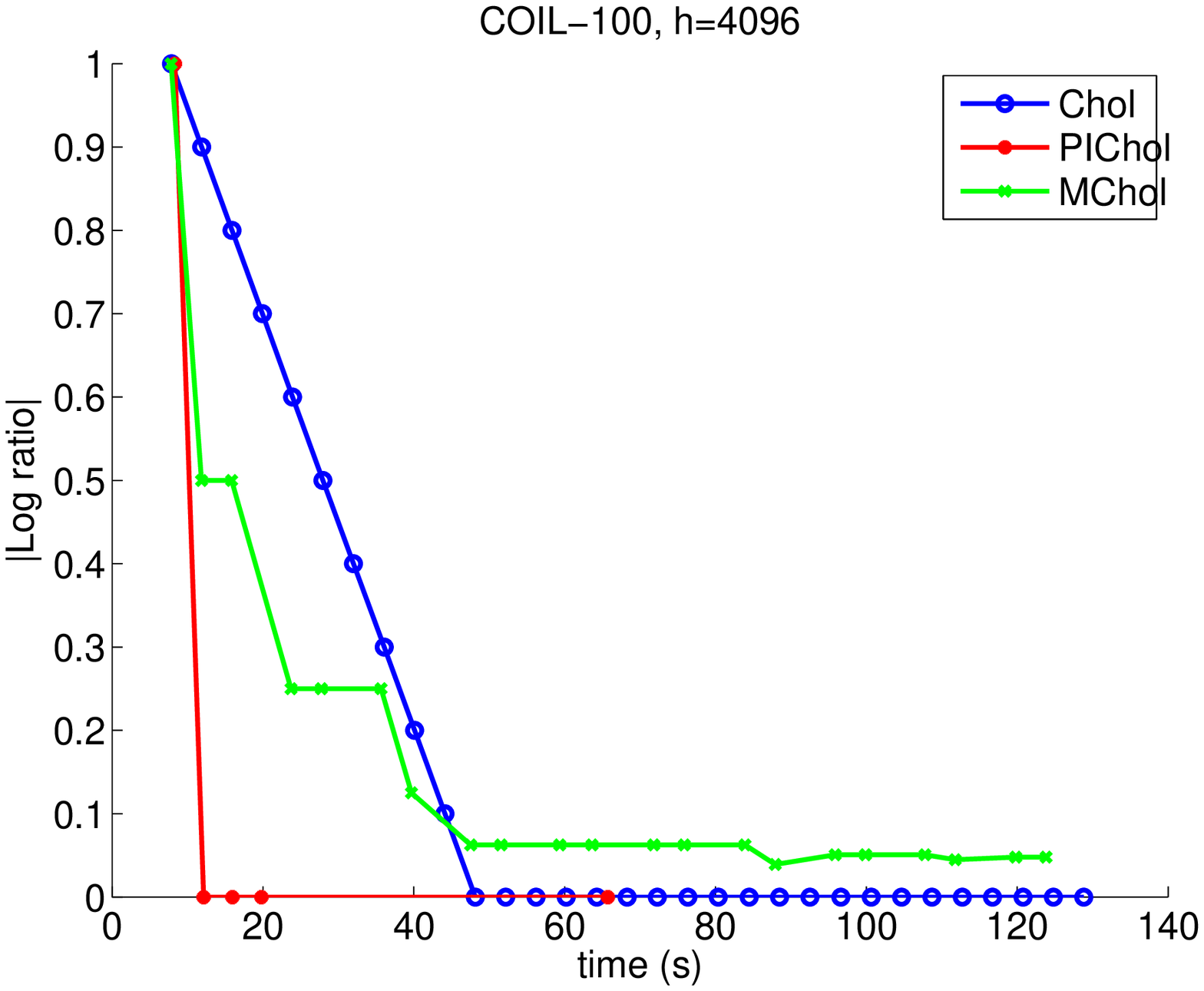}
\includegraphics[width=0.4\textwidth]{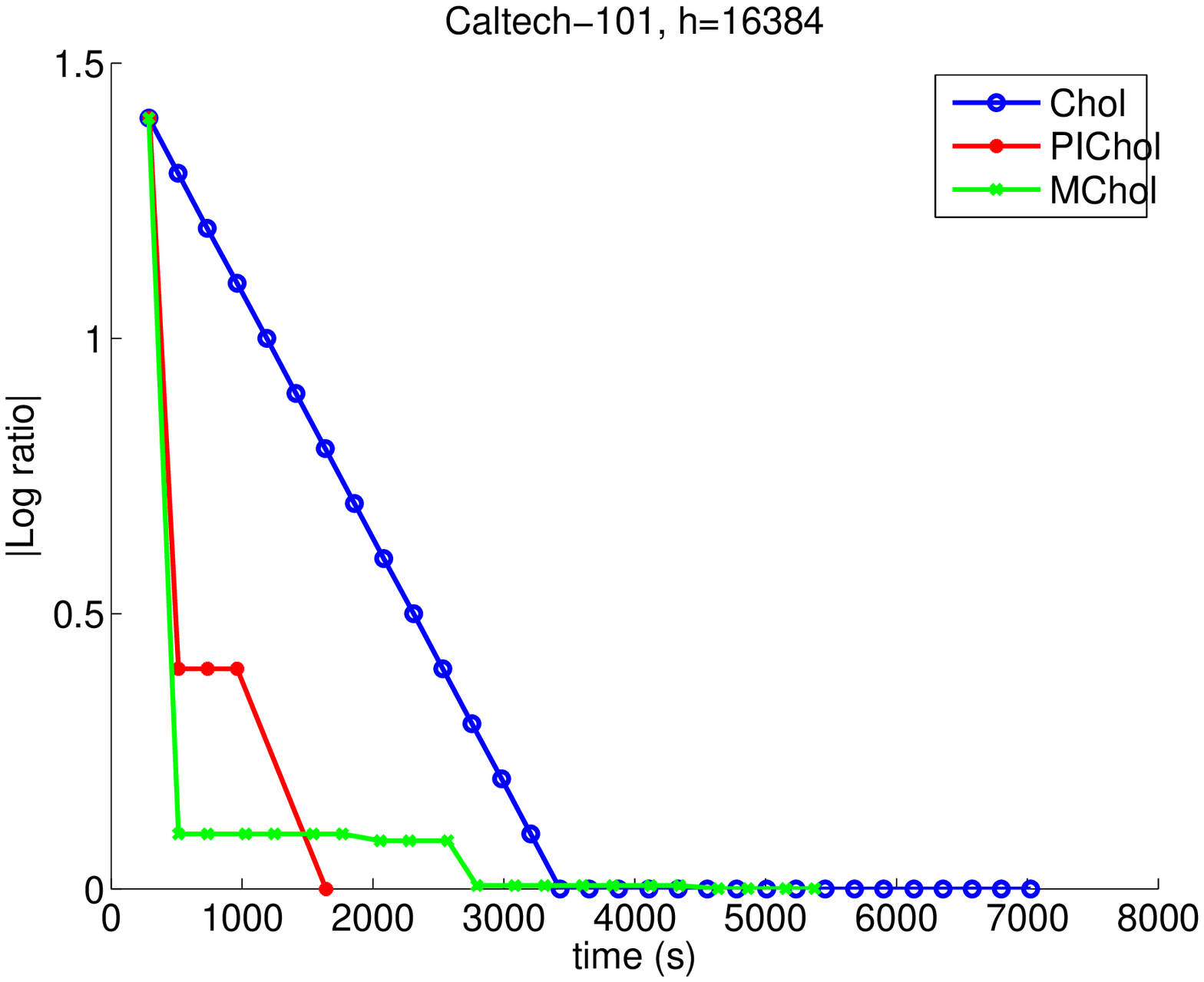}
\caption{Absolute value of the logarithmic ratio between the selected $\lambda$ and the optimal $\lambda$ as \texttt{Chol}, \texttt{PIChol}, and \texttt{MChol} proceed in time.}
\label{fig:accuracy_vs_time_coil_caltech}
\end{figure}

It can be observed that \texttt{PIChol} closely approximates the behavior of \texttt{Chol} and \texttt{SVD}. The approximation for $h=2048,4096$ is better than that for $h=8192,16384$; however, we notice that for the latter two cases with larger $h$, the approximation quality of \texttt{PIChol} is satisfactory when $\lambda$ is close to the optimal $\lambda$ value. This phenomenon justifies the effectiveness of our framework for choosing the optimal regularization parameter. From Table~\ref{table:holdout}, we can see that \texttt{PIChol} and \texttt{MChol} give consistently close approximation of the optimal $\lambda$ value; however, from Figure~\ref{fig:accuracy_vs_time_coil_caltech} and Table~\ref{table:timing}, we conclude that \texttt{MChol} takes much longer time than \texttt{PIChol} to reach the same level of accuracy. Though \texttt{t-SVD} and \texttt{r-SVD} might be faster, they generate very poor approximation of the true hold-out error and the optimal $\lambda$ value, and therefore their efficiency advantage is of little practical use.

Note that another potential way of finding the optimal $\lambda$ from a sparsely sampled set of $\lambda$ values is to interpolate the hold-out error itself based on the hold-out errors computed for the sparsely sampled set. We empirically found that this approach did not yield a good fit to the true hold-out error curve. In the interest of space, we show the detailed results for this scheme (named as \texttt{PINRMSE}) in the supplementary materials.

An alternative way to find the optimal $\lambda$ that corresponds to the minimal
hold-out error is by interpolating a sparse set of hold-out errors directly. We
refer to this approach as \texttt{PINRMSE}. More precisely, \texttt{PINRMSE} is
equivalent to replacing the $g \times \textrm{D}$ matrix $\textrm{\textbf{T}}$
in Algorithm~\ref{alg:chol_interp} with a $g \times 1$ vector $\textrm{\textbf{t}}$, where the entries in $\textrm{\textbf{t}}$ are the hold-out errors that correspond to the sparsely sampled $\lambda$ values. We then interpolate the hold-out errors for the dense set of $\lambda$ values.

Figure~\ref{fig:pinrmse} demonstrates that \texttt{PINRMSE} often gives substantially worse interpolation accuracy than \texttt{PIChol} and could therefore result in dramatically wrong $\lambda$ values. For instance, while \texttt{PINRMSE} achieved the true optimal $\lambda$'s on COIL-100 and Caltech-256 data sets, it selected $\lambda$s that are significantly far away from the optimal values on MNIST and Caltech-101 data sets. On the contrary, \texttt{PIChol} consistently selected the correct $\lambda$'s on all the data sets.

\begin{figure}
\centering
\includegraphics[width=0.425\textwidth]{mnist_chol_holdout_error_zoomin_h_16384.pdf}
\includegraphics[width=0.425\textwidth]{coil100_chol_holdout_error_zoomin_h_4096.pdf}\\
\includegraphics[width=0.425\textwidth]{caltech101_chol_holdout_error_zoomin_h_16384.pdf}
\includegraphics[width=0.425\textwidth]{caltech256_chol_holdout_error_zoomin_h_16384.pdf}
\caption{Comparisons between \texttt{PINRMSE} and \texttt{Chol}, \texttt{PIChol}, \texttt{SVD}, \texttt{t-SVD}, \texttt{r-SVD}. Both \texttt{PINRMSE} and \texttt{PIChol} use the parameters $g=4$ and $r=2$ in the context of Algorithm 1.}
\label{fig:pinrmse}
\end{figure}

\subsection{Normalized Root Mean Squared Error}
\noindent Figure \ref{fig:nrmse} shows the normalized root mean squared error (NRMSE) for least-squares fit of \texttt{PIChol} on MNIST. Similar trends hold for all considered data-sets. Recall that naively using the mean of target variable implies NRMSE of $1$. Our maximum NRMSE of $0.0457$ hence implies quite high interpolation accuracy.

\begin{figure}
\centering
\includegraphics[width=0.50\textwidth]{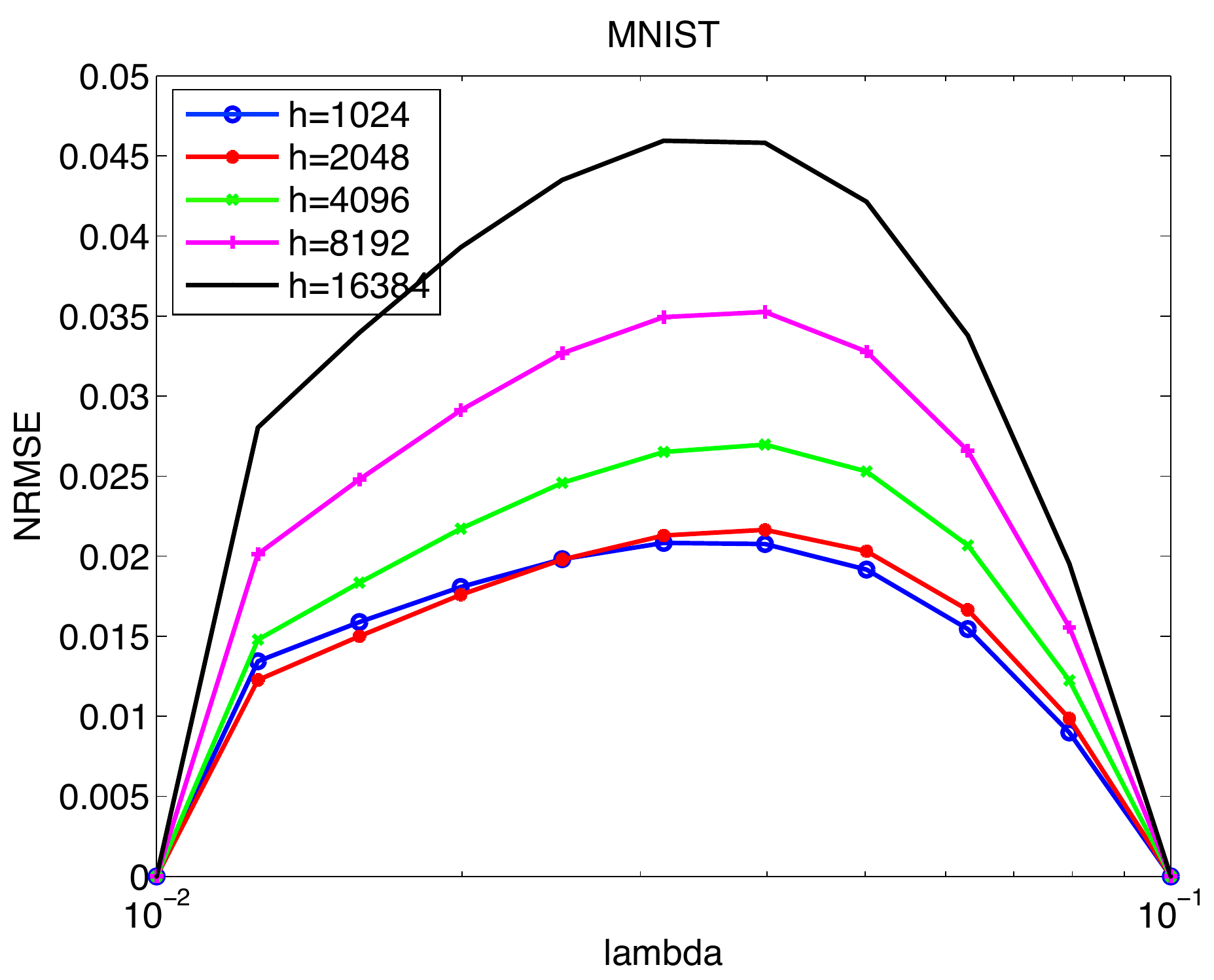}
\caption{NRMSE for $pi$Cholesky on MNIST as a function of regularization parameter $\lambda$.}
\label{fig:nrmse}
\end{figure}

\section{Conclusions \& Future Work}

\vspace{0.1cm}\noindent In this work, we proposed an efficient way to densely interpolate Cholesky factors of Hessian matrix for a sparse set of $\lambda$ values. This idea enabled us to exhaustively explore the space of $\lambda$ values, and therefore optimally minimize the hold-out error while incurring only a fraction of the cost of exact cross-validation. Our key observation was that Cholesky factors for different $\lambda$ values tend to lie on smooth curves that can be approximated accurately using polynomial functions. We theoretically proved this observation and provided an error bound for our approximation. We presented a framework to learn these multiple polynomial functions simultaneously, and proposed solutions for its efficiency challenges. In particular, we proposed a recursive block Cholesky vectorization strategy for efficient vectorization of a triangular matrix. This is a general scheme and is not restricted to the scope of this work.

Currently, we apply the learned polynomial functions within a particular validation fold. Going forward, we intend to use these functions to \textit{warm-start} the learning process in a different fold. This would reduce the number of exact Cholesky factors required in a fold, further improving our performance. We also intend to apply our framework to speed-up regularization in other problems, \textit{e.g.}, matrix completion \cite{mmmf} and sparse coding \cite{sc_admm}.

%
%\appendix \input{appendix}

\clearpage

\small{
\bibliographystyle{plain}
\bibliography{sigproc}

\begin{thebibliography}{10}

\bibitem{alpaydin2004introduction}
Ethem Alpaydin.
\newblock {\em Introduction to machine learning}.
\newblock MIT press, 2004.

\bibitem{BrennerScott:FEM}
S.~Brenner and R.~Scott.
\newblock {\em The Mathematical Theory of Finite Element Methods}.
\newblock Springer, 3rd edition, 2007.

\bibitem{bunch1978updating}
James~R Bunch and Christopher~P Nielsen.
\newblock Updating the singular value decomposition.
\newblock {\em Numerische Mathematik}, 31(2):111--129, 1978.

\bibitem{cawley2004fast}
Gavin~C Cawley and Nicola~LC Talbot.
\newblock Fast exact leave-one-out cross-validation of sparse least-squares
  support vector machines.
\newblock {\em Neural networks}, 17(10):1467--1475, 2004.

\bibitem{dietterich1995solving}
Thomas~G. Dietterich and Ghulum Bakiri.
\newblock Solving multiclass learning problems via error-correcting output
  codes.
\newblock {\em arXiv preprint cs/9501101}, 1995.

\bibitem{efron2004least}
Bradley Efron, Trevor Hastie, Iain Johnstone, Robert Tibshirani, et~al.
\newblock Least angle regression.
\newblock {\em The Annals of statistics}, 32(2):407--499, 2004.

\bibitem{fei2007learning}
Li~Fei-Fei, Rob Fergus, and Pietro Perona.
\newblock Learning generative visual models from few training examples: An
  incremental bayesian approach tested on 101 object categories.
\newblock {\em CVIU}, 2007.

\bibitem{golub1965calculating}
Gene Golub and William Kahan.
\newblock Calculating the singular values and pseudo-inverse of a matrix.
\newblock {\em Journal of the Society for Industrial \& Applied Mathematics,
  Series B: Numerical Analysis}, 2(2):205--224, 1965.

\bibitem{golub2012matrix}
Gene~H Golub and Charles~F Van~Loan.
\newblock {\em Matrix computations}.
\newblock Johns Hopkins University Press, 2012.

\bibitem{griffin2007caltech}
Gregory Griffin, Alex Holub, and Pietro Perona.
\newblock Caltech-256 object category dataset.
\newblock 2007.

\bibitem{gu1995downdating}
Ming Gu and Stanley~C Eisenstat.
\newblock Downdating the singular value decomposition.
\newblock {\em SIAM Journal on Matrix Analysis and Applications},
  16(3):793--810, 1995.

\bibitem{hager1989updating}
William~W Hager.
\newblock Updating the inverse of a matrix.
\newblock {\em SIAM review}, 31(2):221--239, 1989.

\bibitem{halko2011finding}
Nathan Halko, Per-Gunnar Martinsson, and Joel~A Tropp.
\newblock Finding structure with randomness: Probabilistic algorithms for
  constructing approximate matrix decompositions.
\newblock {\em SIAM}, 53(2):217--288, 2011.

\bibitem{hansen1987truncatedsvd}
Per~Christian Hansen.
\newblock The truncatedsvd as a method for regularization.
\newblock {\em BIT Numerical Mathematics}, 27(4):534--553, 1987.

\bibitem{horn2012matrix}
Roger~A Horn and Charles~R Johnson.
\newblock {\em Matrix analysis}.
\newblock Cambridge university press, 2012.

\bibitem{kaess2011isam2}
Michael Kaess, Hordur Johannsson, Richard Roberts, Viorela Ila, John Leonard,
  and Frank Dellaert.
\newblock isam2: Incremental smoothing and mapping with fluid relinearization
  and incremental variable reordering.
\newblock In {\em ICRA}, pages 3281--3288. IEEE, 2011.

\bibitem{kar2012random}
Purushottam Kar and Harish Karnick.
\newblock Random feature maps for dot product kernels.
\newblock {\em Journal of Machine Learning Research}, 22:583--591, 2012.

\bibitem{kim2007interior}
Seung-Jean Kim, Kwangmoo Koh, Michael Lustig, Stephen Boyd, and Dimitry
  Gorinevsky.
\newblock An interior-point method for large-scale l1regularized least squares.
\newblock {\em STSP}, 2007.

\bibitem{koutis2012fast}
Ioannis Koutis, Gary~L Miller, and Richard Peng.
\newblock A fast solver for a class of linear systems.
\newblock {\em Communications of the ACM}, 55(10):99--107, 2012.

\bibitem{lazebnik2006beyond}
Svetlana Lazebnik, Cordelia Schmid, and Jean Ponce.
\newblock Beyond bags of features: Spatial pyramid matching for recognizing
  natural scene categories.
\newblock In {\em CVPR}, 2006.

\bibitem{le2013fastfood}
Quoc Le, Tamas Sarlos, and Alexander Smola.
\newblock Fastfood-computing hilbert space expansions in loglinear time.
\newblock In {\em ICML}, pages 244--252, 2013.

\bibitem{lecunmnist}
Yann LeCun and Corinna Cortes.
\newblock The mnist database of handwritten digits, 1998.

\bibitem{svdregression}
John Mandel.
\newblock Use of the singular value decomposition in regression analysis.
\newblock {\em The American Statistician}, 36(1):15--24, 1982.

\bibitem{Marsden}
Jerrold~E. Marsden, Tudor Ratiu, and Ralph Abraham.
\newblock {\em {Manifolds, Tensor Analysis, and Applications}}.
\newblock Springer-Verlag, 3rd edition, 2001.

\bibitem{coil100}
S.~A. Nene, S.~K. Nayar, and H.~Murase.
\newblock Columbia object image library (coil-100).
\newblock Technical Report CUCS-006-96, Columbia University, 1996.

\bibitem{MatrixCookbook}
Kaare~Brandt Petersen and Michael~Syskind Pedersen.
\newblock {\em The Matrix Cookbook}.
\newblock Technical University of Denmark, 2012.

\bibitem{saad2003iterative}
Yousef Saad.
\newblock {\em Iterative methods for sparse linear systems}.
\newblock Siam, 2003.

\bibitem{scholkopf1999advances}
B.~Schlkopf, C.~J.~C. Burges, and A.~J. Smola, editors.
\newblock {\em {Advances in Kernel Methods: Support Vector Learning}}.
\newblock MIT Press, 1999.

\bibitem{mmmf}
Nathan Srebro, Jason Rennie, and Tommi Jaakkola.
\newblock Maximum margin matrix factorization.
\newblock In {\em Advances in Neural Information Processing Systems 17}, pages
  1329--1336, 2004.

\bibitem{sc_admm}
Arthur Szlam, Zhaohui Guo, and Stanley Osher.
\newblock A split bregman method for non-negative sparsity penalized least
  squares with applications to hyperspectral demixing.
\newblock In {\em ICIP}, 2010.

\bibitem{tikhonov1943stability}
Andrey~Nikolayevich Tikhonov.
\newblock On the stability of inverse problems.
\newblock In {\em Dokl. Akad. Nauk SSSR}, volume~39, 1943.

\bibitem{van2003iterative}
Henk~A Van~der Vorst.
\newblock {\em Iterative Krylov methods for large linear systems}, volume~13.
\newblock Cambridge University Press, 2003.

\end{thebibliography}
}

\end{document}